\Crefname{corollary}{Corollary}{Corollaries}
\Crefname{proposition}{Proposition}{Propositions}
\Crefname{theorem}{Theorem}{Theorems}
\Crefname{definition}{Definition}{Definitions}
\Crefname{assumption}{Assumption}{Assumptions}
\Crefname{example}{Example}{Examples}
\Crefname{remark}{Remark}{Remarks}
\Crefname{setting}{Setting}{Settings}
\Crefname{lemma}{Lemma}{Lemmas}
\declaretheorem[name=Theorem,refname={Theorem,Theorems},Refname={Theorem,Theorems}]{theorem}
\declaretheorem[name=Lemma,refname={Lemma,Lemmas},Refname={Lemma,Lemmas},sibling=theorem]{lemma}
\newcommand{\core}{\ensuremath{\tt CORe}\xspace}
\newcommand{\lincore}{\ensuremath{\tt LinCORe}\xspace}
\newcommand{\giro}{\ensuremath{\tt Giro}\xspace}
\newcommand{\linphe}{\ensuremath{\tt LinPHE}\xspace}
\newcommand{\lints}{\ensuremath{\tt LinTS}\xspace}
\newcommand{\linucb}{\ensuremath{\tt LinUCB}\xspace}
\newcommand{\phe}{\ensuremath{\tt PHE}\xspace}
\newcommand{\ts}{\ensuremath{\tt TS}\xspace}
\newcommand{\ucb}{\ensuremath{\tt UCB1}\xspace}
\mathchardef\mhyphen="2D
\newcommand{\ucbv}{\ensuremath{\tt UCB\mhyphen V}\xspace}
\newcommand{\npts}{\ensuremath{\tt NP\mhyphen TS}\xspace}
\newcommand{\ssmc}{\ensuremath{\tt SSMC}\xspace}
\DeclareMathOperator*{\argmax}{arg\,max\,}
\newcommand{\cF}{\mathcal{F}}
\newcommand{\cN}{\mathcal{N}}
\newcommand{\eps}{\varepsilon}
\newcommand{\E}[1]{\mathbb{E} \left[#1\right]}
\newcommand{\condE}[2]{\mathbb{E} \left[#1 \,\middle|\, #2\right]}
\newcommand{\Et}[1]{\mathbb{E}_t \left[#1\right]}
\newcommand{\prob}[1]{\mathbb{P} \left(#1\right)}
\newcommand{\condprob}[2]{\mathbb{P} \left(#1 \,\middle|\, #2\right)}
\newcommand{\probt}[1]{\mathbb{P}_t \left(#1\right)}
\newcommand{\var}[1]{\mathrm{var} \left[#1\right]}
\newcommand{\condvar}[2]{\mathrm{var} \left[#1 \,\middle|\, #2\right]}
\newcommand{\Mod}[1]{\ \mathrm{mod}\ #1}
\newcommand{\abs}[1]{\left|#1\right|}
\newcommand{\I}[1]{\mathds{1} \! \left\{#1\right\}}
\newcommand{\normw}[2]{\|#1\|_{#2}}
\newcommand{\set}[1]{\left\{#1\right\}}
\newcommand{\T}{^\top}
\newcommand{\realset}{\mathbb{R}}
\title{\core: Capitalizing On Rewards in Bandit Exploration}
\author{
\textbf{\large{Nan Wang}} \\ \normalsize{University of Virginia} \\ \normalsize{nw6a@virginia.edu}
\and
\textbf{\large{Branislav Kveton}} \\ \normalsize{Google Research} \\ \normalsize{bkveton@google.com}
\and
\textbf{\large{Maryam Karimzadehgan}} \\ \normalsize{Google Research} \\ \normalsize{maryamk@google.com}
}
\begin{document}
\maketitle
\begin{abstract}
We propose a bandit algorithm that explores purely by randomizing its past observations. In particular, the sufficient optimism in the mean reward estimates is achieved by exploiting the variance in the past observed rewards. We name the algorithm \textbf{C}apitalizing \textbf{O}n \textbf{Re}wards (\core). The algorithm is general and can be easily applied to different bandit settings. The main benefit of \core is that its exploration is fully data-dependent. It does not rely on any external noise and adapts to different problems without parameter tuning. We derive a $\tilde O(d\sqrt{n\log K})$ gap-free bound on the $n$-round regret of \core in a stochastic linear bandit, where $d$ is the number of features and $K$ is the number of arms. Extensive empirical evaluation on multiple synthetic and real-world problems demonstrates the effectiveness of \core. 
\end{abstract}

\section{Introduction}
\label{sec:introduction}
A \emph{multi-armed bandit} \citep{Lai1985asymptotically, lattimore2020bandit} is an online sequential decision-making problem, where the learning agent chooses actions represented by arms in an $n$-round game. After an arm is pulled, the agent receives a \emph{stochastic} reward generated from an unknown reward distribution associated with the arm. The goal of the agent is to maximize the expected $n$-round reward. As the agent needs to learn the mean rewards of the arms by pulling them, it faces the so-called exploitation-exploration dilemma: \emph{exploit}, and pull the arm with the highest estimated mean reward thus far; or \emph{explore}, and learn more about the arms. 

A \emph{stochastic linear bandit (or linear bandit)} \citep{paat2008linearly,yasin2011improved} is a generalization of a multi-armed bandit where each arm is associated with a feature vector. The mean reward of an arm is the dot product of its feature vector and an unknown parameter vector, which needs to be learned by the agent. A multi-armed bandit can be considered as a special case of linear bandits, where the feature vector of each arm is a one-hot vector indicating the index of the arm, and the parameter vector is a vector of corresponding mean rewards.  

Arguably, the most popular and well-studied exploration strategies for solving bandit problems are \emph{Thompson sampling} (\ts) \citep{thompson1933on,agrawal2013furtherTS} and \emph{Optimism in the Face of Uncertainty (OFU)} \citep{auer2002finite}. \ts maintains a posterior distribution over each arm's mean reward and samples from it to explore. This is efficient and has strong empirical performance when the posterior has a closed form \citep{chapelle2011empirical}. However, if the posterior does not have a closed form, as in many non-linear problems \citep{maccullagh1984generalized,filippi2010parametric}, it needs to be approximated, which is typically computationally expensive and limits the applicability of \ts \citep{gopalan2013thompson, abeille2016linear, riquelme2018deep}. On the other hand, OFU-based algorithms \citep{auer2002finite}, depend on the construction of high-probability confidence sets. They are theoretically near-optimal in multi-armed bandit and linear bandits. However, as the confidence sets are often constructed for worst-case scenarios, they are empirically less competitive. In addition, in some problems, such as generalized linear bandits or neural network bandits \citep{zhou2020neural}, it is only possible to approximate the confidence sets \citep{filippi2010parametric, zhang2016online, provably2017li}. These approximations affect the statistical efficiency of the algorithms and often perform poorly.


To design general algorithms that do not rely on problem-specific confidence sets or posteriors, recent works proposed randomized exploration \citep{baransi2014sub,osband2015bootstrapped,kveton2019garbage,kveton2019perturbedhistory,vaswani2020old}. The key idea is to randomize the reward history of the bandit algorithms before estimating the mean rewards. The randomization strategy is general enough to apply to challenging problems, such as generalized linear bandits or neural network bandits. Bootstrapping \citep{eckles2014Thompson,osband2015bootstrapped,tang2015personalized,vaswani2018new} is one of the randomization strategies, which uses the resampled reward history for mean reward estimation. However, exploration by bootstrapping has been poorly understood theoretically. \citet{kveton2019garbage} showed that bootstrapping can suffer from linear regret in certain bandit instances and proposed to add pseudo rewards to each arm's reward history before bootstrapping. They proved that the pseudo rewards provide sufficient variance for exploration. \citet{baekjin2019on,kveton2019perturbedhistory} further showed that the sufficient variance can be induced by other randomization schemes, which they analyzed. Unfortunately, all the analyses rely on the right amount of external noise or pseudo rewards that match the problem instances. In real-world problems, however, we often do not have prior knowledge of the variance of the reward distributions. Thus the external noise and pseudo rewards are hard to design.

In this work, we propose a general randomized exploration strategy without adding any external noise or pseudo rewards. Specifically, we take advantage of the randomness in the agent's past observed rewards from all arms for exploration. In each round, the learning agent adds to each arm's history the rewards sampled from past observations of all the arms, and pulls the arm with the highest estimated mean reward based on the perturbed histories. We call the resulting algorithm \core, meaning \textbf{C}apitalizing \textbf{O}n \textbf{Re}wards. As \core only relies on past observed rewards, its exploration is \emph{data-dependent}. With a well designed sampling strategy, the observed rewards from all arms provide enough variance for exploration, without the need of knowing the actual reward distributions of the arms. Thus the exploration adapts to different problems without parameter tuning. This is a significant advantage in real-world applications, where we often have no knowledge of the actual reward distributions.

We make the following contributions. First, we propose a randomized exploration strategy that does not rely on any external noise. We show that the new algorithm \core ensures proper variance for exploration by sampling from the past observed rewards, agnostic to the variance of reward distributions. Second, we analyze \core in a linear Gaussian bandit and derive $\tilde O(d\sqrt{n\log K})$ gap-free bounds on its $n$-round regret, where $d$ is the dimension of feature vectors and $K$ is the number of arms. Although we assume Gaussian noise in the analysis, we observe empirically that \core works well when the reward noise is not Gaussian and varies significantly across the arms. Finally, we conduct comprehensive experiments on both synthetic and real-world problems that demonstrate the effectiveness of \core.

\section{Setting}
\label{sec:setting}

We use the following notation throughout the paper. The set $\{1,2,...,n\}$ is denoted by $[n]$. We denote by $u \oplus v$ the concatenation of vectors $u$ and $v$. 
We use $I_d$ to denote a $d\times d $ identity matrix, and use $\tilde O$ as the big-$O$ notation up to polylogarithmic factors in $n$.


A \emph{stochastic linear bandit} \citep{paat2008linearly,yasin2011improved} is an online learning problem where the learning agent sequentially pulls $K$ arms in $n$-rounds and each arm is associated with a $d$-dimensional feature vector. We denote $x_i\in\realset^d$ as the feature vector of arm $i\in [K]$ and $\theta_*\in\realset^d$ as the unknown parameter vector. The \emph{reward} of arm $i$ in round $t\in [n]$, $Y_{i,t}$, is drawn i.i.d.\ from the reward distribution of arm $i$, $P_i$, with mean $\mu_i=x_i\T\theta_*$. 
In round $t$, the learning agent pulls arm $I_t\in[K]$ and receives the reward $Y_{I_t,t}$. To have a more compact notation, we denote $X_t=x_{I_t}$ and $Y_t=Y_{I_t,t}$ as the feature vector of the pulled arm in round $t$ and its observed reward. The agent does not know the mean rewards or the parameter vector in advance and learns them by pulling the arms. The goal of the agent is to maximize its \emph{expected cumulative reward} in $n$ rounds. In particular, when $x_i$ is a $K$-dimensional one-hot vector with $x_i = e_i$, $i\in [K]$, and $\theta_*$ is a vector of $K$ mean rewards, the linear bandit reduces to a \emph{multi-armed bandit} \citep{Lai1985asymptotically,lattimore2020bandit}. 

Without loss of generality, we assume that arm $1$ is optimal, meaning $\mu_1>\max_{i>1}\mu_i$. We denote by $\Delta_i=\mu_1-\mu_i$ the \emph{gap} of arm $i$, which is the difference between the mean rewards of arms $1$ and $i$. Maximizing the expected $n$-round reward is equivalent to minimizing the \emph{expected $n$-round regret}, which is defined as
\begin{equation*}
    R(n) = \sum_{i=2}^K\Delta_i\E{\sum_{t=1}^n\mathbbm 1\{I_t=i\}}.
\end{equation*}
We make the following standard assumptions in this setting. First, the mean reward $\mu_i=x_i\T\theta_*$ for any arm $i\in[K]$ is bounded, and without loss of generality, we assume that it is in $[0, 1]$. Second, the feature vector of the last $d$ arms are a basis in $\realset^d$. This is without loss of generality, as the arms can always be ordered to satisfy this. 

\section{Capitalizing on Rewards in Bandit Exploration}
\label{sec:algorithm}
In this section, we introduce the new algorithm Capitalizing On Rewards (\core). We first illustrate key ideas of \core and discuss how it works in \cref{sec:idea}. In \cref{sec:lincore}, we instantiate the algorithm in a stochastic linear bandit. To be more specific, in the rest of the paper, we use \core to refer to the algorithm applied in a multi-armed bandit, and use \lincore to represent the algorithm in a linear bandit. 

\subsection{Key Ideas and Informal Justification of \core}
\label{sec:idea}
The principle of \core is to utilize the variance in the past observed rewards to incentivize exploration. We first discuss \core in a simple multi-armed bandit to illustrate how it works. Specifically, when estimating the mean reward of arm $i$ in round $t$, \core first perturb each reward of arm $i$ with a reward sampled from all observed rewards in the past $t-1$ rounds. Then the mean of arm $i$ is estimated based on its perturbed rewards. Thus if there is sufficient variance in the past observed rewards, \core is able to overestimate the mean rewards of arms to achieve optimism. 

To be more concrete, we make an analogy between \core and \ts. For example, in a \emph{Gaussian bandit}, adding additive noise to the mean reward estimate is equivalent to posterior sampling. In particular, fix arm $i$ and the number of its pulls $s$. Let $\mu_i\sim\mathcal N(\mu_0,\sigma^2)$ be the mean reward of arm $i$, where $\mathcal N(\mu_0,\sigma^2)$ is the Gaussian prior in \ts and $\sigma^2$ is the variance of the arm's reward distribution. Let $(Y_{i,\ell})_{\ell=1}^s \sim \mathcal N(\mu_i,\sigma^2)$ be $s$ i.i.d.\ noisy observations of $\mu_i$. Then the posterior distribution of $\mu_i$ conditioned on $(Y_{i,\ell})_{\ell=1}^s$ is 
\begin{equation}
\label{eq:posterior}
    \mathcal N\Bigg(\frac{\mu_0+\sum_{\ell=1}^sY_{i,\ell}}{s+1}, \frac{\sigma^2}{s+1}\Bigg).
\end{equation}
It is well known that sampling from this distribution in \ts leads to near-optimal regret \citep{agrawal2013furtherTS}. From another perspective, sampling a mean reward of arm $i$ as above is equivalent to adding i.i.d.\ Gaussian noise to $\mu_0$ and each reward in $(Y_{i,\ell})_{\ell=1}^s$, and then taking the average \citep{kveton2019perturbedhistory}. Specifically,
\begin{equation*}
    \frac{\mu_0+Z_0 + \sum_{\ell=1}^{s}(Y_{i,\ell}+Z_\ell)}{s+1}
\end{equation*}
is a sample from distribution \eqref{eq:posterior} for $(Z_\ell)_{\ell=0}^s\sim\mathcal N(0,\sigma^2)$.

However, in practice, $\sigma^2$ depends on the specific problem instance and is unknown. Thus the variance of $(Z_\ell)_{\ell=0}^s$ needs to be carefully tuned to match $\sigma^2$. The key insight in \core is that the exact value of $\sigma^2$ does not have to be known. Instead of sampling the noise from a given distribution, \core samples $(Z_{l})_{\ell=0}^s$ from a reward pool, which is composed of previously observed rewards of all arms. Then \core adds sampled rewards to each reward of arm $i$ for mean reward estimate. As we show in \cref{sec:analysis}, after an initialization period of $\frac{4\log n}{z-1-\log z}+1$ rounds, for any $z \in (0, 1)$, the empirical variance of the observed rewards so far is at least $z\sigma^2/2$ with a high probability. Thus, after scaling the rewards by $\alpha$ to construct the reward pool, the variance of each i.i.d.\ sampled reward is greater than $\alpha^2z\sigma^2/2$. This is at least $\sigma^2$ for $\alpha^2 z > 2$, and can be achieved without knowing $\sigma^2$. 




\subsection{Capitalizing on Rewards in a Stochastic Linear Bandit}
\label{sec:lincore}
We present the algorithm in a stochastic linear bandit (\lincore) in \cref{alg:lincore}, as it is a more general setting than a multi-armed bandit. In round $t$, \lincore first constructs a \emph{reward pool} $\mathcal{R}_t$ from all the past $t-1$ observed rewards. To achieve optimism in the mean reward estimate in round $t$, each reward $Y_\ell$ observed from a pulled arm with feature vector $X_\ell$ is perturbed by a randomly sampled reward $Z_\ell$ from $\mathcal{R}_t$ to fit a linear model (line 11), 
\begin{equation}
    \tilde{\theta}_t \leftarrow G_t^{-1}\sum_{\ell=1}^{t-1}X_\ell\Big[Y_\ell+ Z_\ell\Big], 
\end{equation}
where
\begin{equation}
    \quad G_t \leftarrow \sum_{\ell=1}^{t-1} X_\ell X_\ell^\top + \lambda I_d
\end{equation}
is the sample covariance matrix up to round $t$ and $\lambda>0$ is the regularization parameter. $\mathcal (Z_\ell)_{\ell=1}^{t-1}$ are i.i.d.\ rewards freshly sampled in each round from $\mathcal{R}_t$. The estimate of the mean reward of arm $i$ is $x_i\T \hat \theta_t$. The arm with the highest reward estimate is pulled. This is similar to Thompson sampling \citep{thompson1933on,agrawal2012thompson,abeille2016linear} and perturbed history exploration  \citep{kveton2019perturbedhistory,kveton2020linphe} in linear bandits, which add noise to the parameter estimate. However, \lincore does not depend on any posterior variance or external pseudo rewards for exploration, and instead only relies on randomness in the agent's own reward history. 
\begin{algorithm}[t]
\caption{Capitalizing on Rewards in a stochastic linear bandit (\lincore)}
\hspace*{\algorithmicindent}\;\textbf{Input}: Initial variance ratio $z\in(0,1)$, sample \\ \hspace*{\algorithmicindent}\; scale ratio $\alpha\in\mathbb R^+$, number of rounds $n$

\begin{algorithmic}[1]
\label{alg:lincore}
\FOR{$t=1,2,...,n$}
\IF{$t\leq \max\big\{d, \frac{4\log n}{(z-1-\log z)}+1\big\}$}
\STATE $I_t\leftarrow t \Mod K$
\ELSE
        \STATE $\mathcal{R}_t \gets ()$
        \STATE $\mu(\mathcal{R}_t) = \frac{1}{t - 1} \sum_{\ell = 1}^{t - 1} Y_{\ell}$
        \FOR{$\ell = 1, \dots, t - 1$}
            \STATE $\mathcal{R}_t \gets \mathcal{R}_t \oplus (\alpha (Y_{\ell} - \mu(\mathcal{R}_t)), \, \alpha(\mu(\mathcal{R}_t) - Y_{\ell}))$
        \ENDFOR
\STATE $(Z_\ell)_{\ell=1}^{t-1} \leftarrow$ Sample $t-1$ i.i.d.\ rewards from $\mathcal{R}_t$
\STATE $G_t \leftarrow \sum_{\ell=1}^{t-1} X_\ell X_\ell^\top + \lambda I_d$
\STATE $\tilde{\theta}_t \leftarrow G_t^{-1}\sum_{\ell=1}^{t-1}X_\ell\Big[Y_\ell+ Z_\ell\Big]$
\STATE $I_t\leftarrow \arg\max_{i\in [K]}x_i^\top\hat\theta_t$
\ENDIF
\STATE Pull arm $I_t$ and get reward $Y_{t}$ 
\ENDFOR
\end{algorithmic}
\end{algorithm}

Specifically, in lines 1-3 of \cref{alg:lincore}, we initialize \lincore by pulling arms sequentially for the first $\max\{d,\frac{4\log n}{z-1-\log z}+1\}$ rounds, where $z\in{(0,1)}$ is a tunable parameter that determines the initial variance in the reward pool. After initialization, in each round $t$, \lincore processes the past $t-1$ rewards and creates a new reward pool $\mathcal{R}_t$ in lines 5-8. \lincore scale the rewards by $\alpha$ to guarantee sufficient variance in $\mathcal R_t$ for exploration, as suggested in \cref{sec:idea}. Besides, the processed rewards in $\mathcal{R}_t$ are centered to have zero mean and each reward $y$ has its symmetric reward $-y$ around zero in the pool. This additional processing is only to simplify the theoretical analysis in \cref{sec:analysis}. It does not change the variance of samples from the reward pool and \lincore performs in practice similarly without it. \lincore then samples $t-1$ i.i.d.\ rewards from $\mathcal{R}_t$ (line 9). To get the parameter estimate $\hat \theta_t$, \lincore perturb each observed reward by a sampled reward from $\mathcal{R}_t$ to fit a linear model (lines 10-11). Finally, \lincore pulls arm $I_t$ with highest mean reward estimate from $\tilde\theta_t$ and observe its reward $Y_t$. It is important to note that \cref{alg:lincore} is only an instance of the proposed general randomization strategy in a linear bandit setting. The parameter estimation in lines 10-12 (\cref{alg:lincore}) can be replaced by any other estimator, such as a neural network, to get more general algorithms. Here we choose to show the linear case rather than a general case to be more concrete for reproducibility. Besides, when  feature vectors are one-hot vectors with $x_i=e_i$, \cref{alg:lincore} corresponds to \core in a multi-armed bandit, which is essentially using the average of each arm's perturbed rewards as the mean reward estimate.  
 
The exploration in \lincore arises from the variance in $\mathcal R_t$. For example, if the reward distributions of all arms are Gaussian with variance of $\sigma^2$, 
we want a comparable variance in $\mathcal R_t$, so that the sampled rewards from $\mathcal R_t$ can offset unfavorable reward histories. To achieve this, \lincore initially pulls arms sequentially $\max\{d,\frac{4\log n}{z-1-\log z}+1\}$ times, to accumulate observations. We prove in \cref{sec:variance-analysis} that after this initialization, the empirical variance of observed rewards is at least $z\sigma^2/2$ with a high probability. However, $z\sigma^2/2$ may not be sufficient for effective exploration. Once $z$ is fixed, the \emph{scale ratio} $\alpha$ dictates the multiplicative factor of the variance of each sampled reward in the reward pool, and thus controls the trade-off between exploration and exploitation. Larger $\alpha$ leads to more exploration. More importantly, the variance in $\mathcal R_t$ is at least $\alpha^2 z/2$ of that of the reward distributions. So it is automatically adapted to the problem. 



\section{Regret Analysis}
\label{sec:analysis}

We analyze the regret of \lincore in the case of Gaussian rewards, where the rewards of arm $i$ are sampled i.i.d.\ from a Gaussian distribution $Y_{i, t} \sim \mathcal N(\mu_i, \sigma^2)$ for all $i \in [K]$ and $t \in [n]$, and $\mu_i \in [0, 1]$. The variance of reward distributions is $\sigma^2$, identical for all arms. Based on this setting, we derive the following gap-free bound on the $n$-round regret of \lincore.
\begin{restatable}[]{theorem}{lincoreregret}
\label{thm:lincore regret bound} For any $1 / 2 \leq z < 1$, $4 \sqrt{\sigma^2 \log n} \geq 1$, and $n \geq 24$, the expected $n$-round regret of \lincore is
\begin{align*}
  R(n)
  = \tilde{O}(d \sqrt{n \log K})
\end{align*}
\end{restatable}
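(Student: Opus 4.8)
The plan is to follow the template for analyzing randomized, Thompson-sampling-style exploration in linear bandits, adapted to the data-dependent perturbation of \lincore. I would first separate the perturbed estimate into its conditional mean and a zero-mean perturbation. Writing $\hat\theta_t = G_t^{-1}\sum_{\ell=1}^{t-1} X_\ell Y_\ell$ for the regularized least-squares estimate, the perturbed parameter satisfies $\tilde\theta_t - \hat\theta_t = G_t^{-1}\sum_{\ell=1}^{t-1} X_\ell Z_\ell$. Since each $Z_\ell$ is drawn i.i.d.\ from the symmetrized, centered pool $\mathcal R_t$, conditioned on the history $\cF_t$ the perturbation has zero mean, is symmetric about $0$, and has per-sample variance $\alpha^2 \hat V_t$, where $\hat V_t$ is the empirical variance of the observed rewards. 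The first ingredient is the variance lower bound sketched in \cref{sec:idea} and proved in \cref{sec:variance-analysis}: after the initialization of $\frac{4\log n}{z-1-\log z}+1$ rounds, $\hat V_t \geq z\sigma^2/2$ holds with high probability, so the perturbation carries variance at least $\alpha^2 z\sigma^2/2$ in every direction weighted by $G_t$; the hypotheses $1/2\le z<1$ and $n\ge 24$ guarantee this initialization is short.

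Next I would establish two-sided concentration. For the unperturbed part, a self-normalized tail bound gives $\abs{x_i\T(\hat\theta_t - \theta_*)} \leq c_t \normw{x_i}{G_t^{-1}}$ simultaneously over all directions with high probability, where $c_t = O(\sigma\sqrt{d\log n})$. For the perturbation, conditioned on $\cF_t$ the quantity $x_i\T(\tilde\theta_t - \hat\theta_t) = \sum_{\ell} (x_i\T G_t^{-1}X_\ell) Z_\ell$ is a weighted sum of bounded independent variables, so a Hoeffding-type bound yields $\abs{x_i\T(\tilde\theta_t - \hat\theta_t)} \leq c_t' \normw{x_i}{G_t^{-1}}$ with high conditional probability; taking a union bound over the $K$ arms to control the maximum reward estimate is what introduces the $\sqrt{\log K}$ factor into $c_t'$. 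On the intersection of these events every reward estimate lies within $O(c_t + c_t')\normw{x_i}{G_t^{-1}}$ of its true mean.

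The crux is the optimism (anti-concentration) step. I would exploit the explicit symmetrization of $\mathcal R_t$: because each $Z_\ell$ is symmetric about $0$, the weighted sum $x_1\T(\tilde\theta_t - \hat\theta_t)$ is symmetric about $0$ conditioned on $\cF_t$, hence nonnegative with probability at least $1/2$. To upgrade this to genuine optimism I must show it exceeds the confidence half-width with probability bounded below by a constant $p$. Using $\condvar{x_1\T(\tilde\theta_t-\hat\theta_t)}{\cF_t} \geq \tfrac{1}{2}\alpha^2 z\sigma^2 \normw{x_1}{G_t^{-1}}^2$ together with control on higher moments, a Paley--Zygmund or Berry--Esseen argument gives $\condprob{x_1\T\tilde\theta_t \geq \mu_1}{\cF_t} \geq p$ once $\alpha$ is large enough to dominate the confidence radius. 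This is the step I expect to be the main obstacle: the perturbation is neither Gaussian nor continuous, its law is data-dependent through $\mathcal R_t$, and the weights $x_1\T G_t^{-1}X_\ell$ may be highly nonuniform, so the anti-concentration must hold uniformly over all weight profiles and over the randomness of the pool.

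Given optimism with constant probability, the remaining regret analysis is standard. On a suboptimal pull $I_t = i$, the event $\{x_i\T\tilde\theta_t \geq x_1\T\tilde\theta_t\}$ together with the concentration events forces the instantaneous regret $\Delta_i$ to be controlled by the estimation width, so after dividing by $p$ the per-round regret is $O((c_t + c_t')\normw{X_t}{G_t^{-1}})$. Summing and applying the elliptical potential lemma, $\sum_{t=1}^n \normw{X_t}{G_t^{-1}} \leq \sqrt{n \sum_{t} \normw{X_t}{G_t^{-1}}^2} = O(\sqrt{dn\log n})$, and multiplying by the confidence radius gives $R(n) = \tilde O(d\sqrt{n\log K})$. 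The failure probabilities of the variance, concentration, and anti-concentration events are driven down polynomially in $n$, so the rounds on which any of them fails contribute only $\tilde O(1)$ to the regret and are absorbed into the bound.
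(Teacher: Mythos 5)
Your proposal follows essentially the same route as the paper's proof: the same decomposition into $\hat\theta_t$ and the zero-mean perturbation $G_t^{-1}\sum_\ell X_\ell Z_\ell$, the same pool-variance lower bound $z\sigma^2/2$ after initialization, self-normalized concentration for $\hat\theta_t$ plus Hoeffding with a union bound over the $K$ arms for the perturbation (the source of $\sqrt{\log K}$), a symmetry-plus-second-moment anti-concentration argument for optimism, and the standard elliptical-potential assembly (which the paper simply outsources to the general bound of \citet{kveton2020linphe}). The only cosmetic deviations are that the paper's anti-concentration yields an optimism probability $p_3 = \Theta(z/\log^2 n)$ rather than a true constant (immaterial under $\tilde{O}$), and it additionally needs an explicit high-probability bound $a = \alpha(4\sqrt{\sigma^2\log n}+1)$ on the pool rewards (\cref{lemma:bound_reward}) to make your implicit boundedness assumption quantitative.
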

for $\alpha = O(\sqrt{z^{-1} d \log n})$. We provide the detailed proof in \cref{sec:regret bound}.

\subsection{Discussion}

The regret of \lincore is $\tilde{O}(d \sqrt{n \log K})$ (\cref{thm:lincore regret bound}), where $d$ is the number of features and $K$ is the number of arms. This is on the same order as the regret bound of \linphe \citep{kveton2020linphe}, a state-of-the-art randomized algorithm for linear bandits. In the infinite arm setting, \citet{abeille2016linear} proved that the regret of \lints is $\tilde{O}(d^\frac{3}{2} \sqrt{n})$, which we also match. Specifically, if the space of arms was discretized on an $\varepsilon$-grid, the number of arms would be $K = \varepsilon^{- d}$ and $\sqrt{\log K} = \sqrt{d \log(1 / \varepsilon)}$.

The key idea in our analysis is to inflate $\alpha$ in the reward pool to achieve optimism. In linear bandits, this idea can be traced to \citet{agrawal2012thompson}. Roughly speaking, $\alpha = O(\sqrt{z^{-1} d \log n})$. This setting is too conservative in practice. Therefore, we experiment with less conservative settings in \cref{sec:experiments}.

The main challenge of our analysis is to analyze the behavior of realized rewards in the reward pool. In \cref{lemma:init-variance}, we show that the rewards have sufficient variance. We bound their magnitude in \cref{lemma:bound_reward}. The rest of our analysis follows the outline of \linphe \citep{kveton2020linphe}, which we generalize from Bernoulli pertubations to those in \cref{sec:variance-analysis}.

\subsection{Reward Pool}
\label{sec:variance-analysis}

The exploration in \lincore is enabled by the variance of sampled rewards from the reward pool $\mathcal R_t$. In this section, we analyze the variance of sampling i.i.d.\ rewards from $\mathcal R_t$, which lays the foundation for the theoretical analysis of \lincore. We use $\sigma^2(\mathcal R_t)$ to represent the variance of one i.i.d.\ sampled reward from $\mathcal R_t$. Specifically, the rewards in $\mathcal R_t$ are simple transformations of all the past $t-1$ observed rewards $(Y_{I_\ell,\ell})_{\ell=1}^{t-1}$ (lines 6-8 in \cref{alg:lincore}). $\sigma^2(\mathcal R_t)$ is algebraically equivalent to the variance of one sampled reward from $(Y_{I_\ell,\ell})_{\ell=1}^{t-1}$ scaled by $\alpha^2$, 
\begin{equation}
    \sigma^2(\mathcal R_t) = \frac{1}{|\mathcal R_t|} \sum_{y\in\mathcal R_t} y^2 = \frac{\alpha^2}{t - 1} \sum_{\ell = 1}^{t - 1} (Y_{I_\ell, \ell} - \mu(\mathcal{R}_t))^2,
\end{equation}
where $\mu(\mathcal{R}_t)$ is the mean of all past rewards observed by the learning agent, as defined in line 6 of \cref{alg:lincore}. Thus a sampled reward from $\mathcal R_t$ can provide the variance of $\sigma^2(\mathcal R_t)$. We characterize $\sigma^2(\mathcal R_t)$ by the following two lemmas, which are proved in \cref{sec:reward pool lemmas}.

\begin{restatable}[]{lemma}{initialvariance}
\label{lemma:init-variance}
For any $n\geq 2$ and $z\in(0,1)$, $\sigma^2(\mathcal R_t)\geq \frac{\alpha^2 z}{2}\sigma^2$ with probability of at least $1-\frac{1}{n}$, jointly for all rounds $t > \frac{4\log n}{z-1-\log z}+1$.
\end{restatable}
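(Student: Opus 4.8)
The plan is to reduce the statement to a lower bound on the empirical variance of the collected rewards, and then combine a chi-squared lower-tail estimate with a union bound over rounds. The identity established just before the lemma gives $\sigma^2(\mathcal{R}_t) = \alpha^2 \hat{V}_t$, where $\hat{V}_t = \frac{1}{t-1}\sum_{\ell=1}^{t-1}(Y_{I_\ell,\ell}-\mu(\mathcal{R}_t))^2$ is the empirical variance of the $t-1$ rewards observed so far. Hence it suffices to show that $\hat{V}_t \geq \tfrac{z}{2}\sigma^2$ holds jointly for all $t > \tfrac{4\log n}{z-1-\log z}+1$ with probability at least $1-\tfrac1n$. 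First I would write each observed reward as $Y_{I_\ell,\ell} = \mu_{I_\ell} + W_\ell$. Because the Gaussian noise $W_\ell$ is drawn fresh in round $\ell$ while the pulled arm $I_\ell$ is measurable with respect to the history up to round $\ell-1$, the sequence $(W_\ell)_{\ell}$ is i.i.d.\ $\mathcal{N}(0,\sigma^2)$ and independent of the (adaptively chosen) arms; this is exactly what lets the argument go through despite the sampling being data-dependent.

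Next I would decompose the sum of squares about the grand mean $\bar{Y}=\mu(\mathcal{R}_t)$ as
\begin{align*}
\sum_{\ell=1}^{t-1}(Y_{I_\ell,\ell}-\bar{Y})^2
&= \sum_{\ell}(W_\ell-\bar{W})^2 + \sum_{\ell}(\mu_{I_\ell}-\bar{\mu})^2 \\
&\quad + 2\sum_{\ell}(W_\ell-\bar{W})(\mu_{I_\ell}-\bar{\mu}),
\end{align*}
where $\bar{W}$ and $\bar{\mu}$ are the corresponding empirical means. The middle between-arm term is nonnegative and can only help. The leading term is pure noise in which the arms play no role whatsoever, so $\sigma^{-2}\sum_{\ell}(W_\ell-\bar{W})^2 \sim \chi^2_{t-2}$ \emph{unconditionally}. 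I would then invoke the Chernoff lower-tail bound for chi-squared variables, $\prob{\chi^2_{t-2} \le z(t-2)} \le \exp\!\big(-\tfrac{t-2}{2}(z-1-\log z)\big)$, and observe that the threshold $t > \tfrac{4\log n}{z-1-\log z}+1$ is tuned precisely so that this per-round probability is at most $n^{-2}$ (up to an absolute constant). A union bound over the at most $n$ rounds in the stated range then turns this into a joint failure probability of at most $\tfrac1n$, giving $\sum_{\ell}(W_\ell-\bar{W})^2 \ge z\sigma^2(t-2)$ simultaneously for all such $t$.

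It remains to dispose of the cross term. Since $\mu_{I_\ell}\in[0,1]$ is predictable and $W_\ell$ is fresh zero-mean Gaussian noise, $\sum_{\ell} W_\ell(\mu_{I_\ell}-\bar{\mu})$ is a martingale-type sum that concentrates at scale $O(\sigma\sqrt{(t-1)\log n})$, which is of lower order than the $z\sigma^2(t-1)$ main term once $t$ exceeds the threshold; this deficit is absorbed by the factor $\tfrac12$, yielding $\hat{V}_t \ge \tfrac{z}{2}\sigma^2$ and hence $\sigma^2(\mathcal{R}_t)\ge \tfrac{\alpha^2 z}{2}\sigma^2$. I expect the main obstacle to be the chi-squared step, namely extracting the exponent in exactly the form $z-1-\log z$ so that the threshold $\tfrac{4\log n}{z-1-\log z}$ and the target confidence $1-\tfrac1n$ line up after the union bound (the factor $4$ rather than $2$ is what converts the per-round tail into the summable $n^{-2}$), while simultaneously retaining enough slack to swallow the cross term arising from the heterogeneous, adaptively selected arm means.
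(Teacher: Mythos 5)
Your reduction to the empirical variance, your treatment of the pure-noise term (the unconditional $\chi^2_{t-2}$ law for $\sigma^{-2}\sum_\ell(W_\ell-\bar W)^2$, the Chernoff lower tail with exponent $z-1-\log z$, the per-round failure probability $n^{-2}$, and the union bound) and the nonnegativity of the between-arm term all agree with the paper's proof. The genuine gap is the cross term. Your claim that $O(\sigma\sqrt{(t-1)\log n})$ is of lower order than the main term \emph{once $t$ exceeds the stated threshold} is false: the threshold $\frac{4\log n}{z-1-\log z}+1$ is independent of $\sigma$, while the arm means are spread on the fixed scale $[0,1]$, so the cross term is linear in $\sigma$ whereas the main term is quadratic in $\sigma$. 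Concretely, after the $\chi^2$ step the slack available is $z\sigma^2(t-2)-\tfrac{z}{2}\sigma^2(t-1)=\tfrac{z}{2}\sigma^2(t-3)$; note the factor $\tfrac12$ is already consumed in converting the $\chi^2_{t-2}$ bound (normalized by $t-2$) into the empirical variance (normalized by $t-1$) --- this is exactly the paper's $\frac{q-1}{q}\geq\frac12$ step, so it is not free to absorb anything else. Requiring $C\sigma\sqrt{t\log n}\leq\tfrac{z}{2}\sigma^2 t$ forces $t\gtrsim \log n/(z^2\sigma^2)$, which for small $\sigma$ is far beyond the lemma's $\sigma$-free threshold; your argument therefore fails in that regime.

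The natural repair --- condition on the arm sequence so that the cross term is $\mathcal N(0,4\sigma^2 B)$ with $B=\sum_\ell(\mu_{I_\ell}-\bar\mu)^2$, then absorb it into $B$ by AM--GM at an additive cost of $\Theta(\sigma^2\log n)$ --- also does not close the gap: that cost needs $t\gtrsim\log n/z$, which the threshold does not supply for small $z$ (as $z\to 0$ the threshold behaves like $4\log n/\log(1/z)\to 0$ while $\log n/z\to\infty$, and the lemma claims all $z\in(0,1)$); moreover, conditioning on the realized arm sequence is not legitimate, since arms are chosen adaptively and the conditioned noise is no longer an i.i.d.\ Gaussian vector. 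The paper avoids the three-term decomposition altogether: it views the centered reward vector as a centered Gaussian $X$ plus a shift $v$ (the centered vector of pulled means) and invokes \cref{lem:general chi-square upper bound}, an Anderson-type inequality stating $\prob{\normw{X}{2}^2\leq\varepsilon^2}\geq\prob{\normw{X+v}{2}^2\leq\varepsilon^2}$; that is, the lower tail of the noncentral quadratic form is dominated by that of the central one, so heterogeneous means can only make the sample variance stochastically larger, at zero cost in the threshold. (The paper also applies this with a data-dependent shift without addressing adaptivity, so that looseness is shared; the $\sigma$-scale mismatch in the cross term is the failure specific to your proposal.) This domination lemma is the missing ingredient: term-by-term concentration on the cross term cannot recover a $\sigma$-independent threshold valid for all $z\in(0,1)$.
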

\cref{lemma:init-variance} states that when there are enough rewards in $\mathcal R_t$ after the initialization, the variance of sampling a reward from $\mathcal R_t$ is $\Omega(\sigma^2)$ with a high probability, which provides the variance needed for exploration. On the other hand, the variance should not be too large, which would hurt the convergence of mean reward estimates. \cref{lemma:bound_reward} shows that the rewards in $\mathcal R_t$ are bounded with high probability.
\begin{restatable}[]{lemma}{boundreward}
\label{lemma:bound_reward}
    For any $n\geq 2$ and $t\leq n$, with probability of at least $1-\frac{1}{n}$, the absolute values of the rewards in reward pool $\mathcal R_t$ are bounded by $\alpha(4\sqrt{\sigma^2\log(n)}+1)$. 
\end{restatable}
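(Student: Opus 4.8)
The plan is to reduce the statement to a uniform bound on the reward noise and then propagate it through the centering step. Each entry of $\mathcal R_t$ produced in lines 6--8 equals $\pm\alpha(Y_\ell - \mu(\mathcal R_t))$ for some $\ell \le t-1$, so its absolute value is $\alpha\abs{Y_\ell - \mu(\mathcal R_t)}$. Hence it suffices to show that, with probability at least $1 - 1/n$,
\begin{equation*}
  \abs{Y_\ell - \mu(\mathcal R_t)} \le 4\sqrt{\sigma^2 \log n} + 1 \quad \text{for all } \ell \le t-1.
\end{equation*}
I would write each observed reward as $Y_\ell = \mu_{I_\ell} + \eta_\ell$, where $\eta_\ell = Y_\ell - \mu_{I_\ell}$ is the realized reward noise, and split $\mu(\mathcal R_t) = \bar\mu + \bar\eta$ with $\bar\mu = \frac{1}{t-1}\sum_{j} \mu_{I_j}$ and $\bar\eta = \frac{1}{t-1}\sum_j \eta_j$.

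The core step is a high-probability bound on the noise terms. Although $I_\ell$ is chosen adaptively from the history, conditioning on the filtration $\mathcal{F}_{\ell-1}$ before round $\ell$ fixes $I_\ell$, and the Gaussian reward model gives $\eta_\ell \mid \mathcal{F}_{\ell-1} \sim \mathcal N(0, \sigma^2)$; marginalizing, each $\eta_\ell$ obeys the standard Gaussian tail $\prob{\abs{\eta_\ell} \ge x} \le \exp(-x^2/(2\sigma^2))$. Taking $x = 2\sqrt{\sigma^2 \log n}$ makes this at most $n^{-2}$, and a union bound over the at most $n-1$ realized rewards $Y_1, \dots, Y_{t-1}$ yields
\begin{equation*}
  \prob{\max_{\ell \le t-1} \abs{\eta_\ell} > 2\sqrt{\sigma^2 \log n}} \le \frac{n-1}{n^2} \le \frac1n.
\end{equation*}

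On the complementary (good) event, I would finish by the triangle inequality: since $\bar\mu$ is an average of means in $[0,1]$ we have $\abs{\mu_{I_\ell} - \bar\mu} \le 1$, and since $\abs{\bar\eta} \le \max_{j}\abs{\eta_j}$ both noise contributions are at most $2\sqrt{\sigma^2\log n}$, so
\begin{equation*}
  \abs{Y_\ell - \mu(\mathcal R_t)} \le \abs{\mu_{I_\ell} - \bar\mu} + \abs{\eta_\ell} + \abs{\bar\eta} \le 1 + 4\sqrt{\sigma^2\log n}.
\end{equation*}
Multiplying by $\alpha$ gives the claimed bound $\alpha(4\sqrt{\sigma^2\log n} + 1)$ on every entry of $\mathcal R_t$.

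The main obstacle is quantitative rather than conceptual: matching the exact constant $4\sqrt{\sigma^2\log n}+1$. This forces me to (i) union bound only over the $n-1$ \emph{realized} rewards rather than all $Kn$ arm--round pairs, so that $\log K$ does not appear, and (ii) use the sharp one-sided Gaussian tail $\prob{Z \ge t} \le \tfrac12 e^{-t^2/2}$ (hence $\prob{\abs{\eta_\ell}\ge x}\le e^{-x^2/(2\sigma^2)}$ without an extra factor of $2$), so that the threshold is exactly $2\sqrt{\sigma^2\log n}$ and the centering step doubles it to $4\sqrt{\sigma^2\log n}$. Handling the adaptive choice of $I_\ell$ through the conditioning argument above is the only place where care with the probabilistic model is needed.
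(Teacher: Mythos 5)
Your proof is correct and follows essentially the same route as the paper's: a sub-Gaussian tail bound at threshold $2\sqrt{\sigma^2\log n}$, a union bound over the at most $n$ realized rewards, and the fact that the means lie in $[0,1]$, so that centering by $\mu(\mathcal R_t)$ and scaling by $\alpha$ yields $\alpha(4\sqrt{\sigma^2\log n}+1)$. If anything, you are more careful than the paper on two minor points it glosses over: handling the adaptive arm choice via conditioning on the filtration, and avoiding the spurious factor of $2$ in the two-sided tail bound.
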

In particular, in \cref{lemma:init-variance}, the lower bound on $\sigma^2(\mathcal R_t)$ ensures the overestimate of the mean reward estimate for exploration. The bound of the scale of sampled rewards from $\mathcal R_t$ in \cref{lemma:bound_reward} indicates the convergence of the mean reward estimates. \cref{lemma:init-variance,lemma:bound_reward} provide the justification of using the agent's past observed rewards for effective exploration in \lincore, and are applied throughout the proof of \cref{thm:lincore regret bound} in \cref{sec:regret bound}. 

\begin{figure*}[h]
    \includegraphics[width=0.32\textwidth]{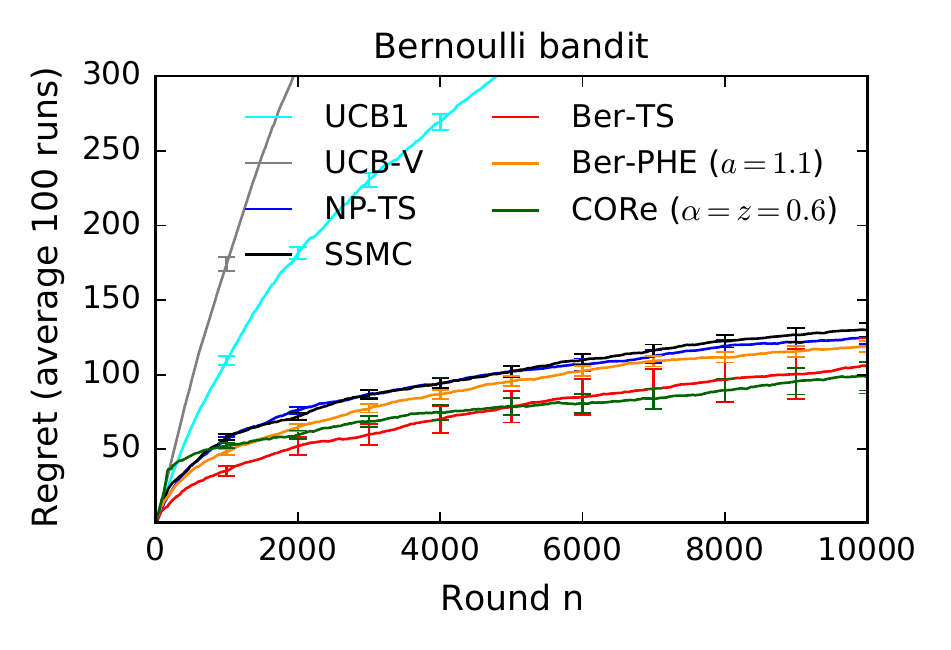}
    \includegraphics[width=0.32\textwidth]{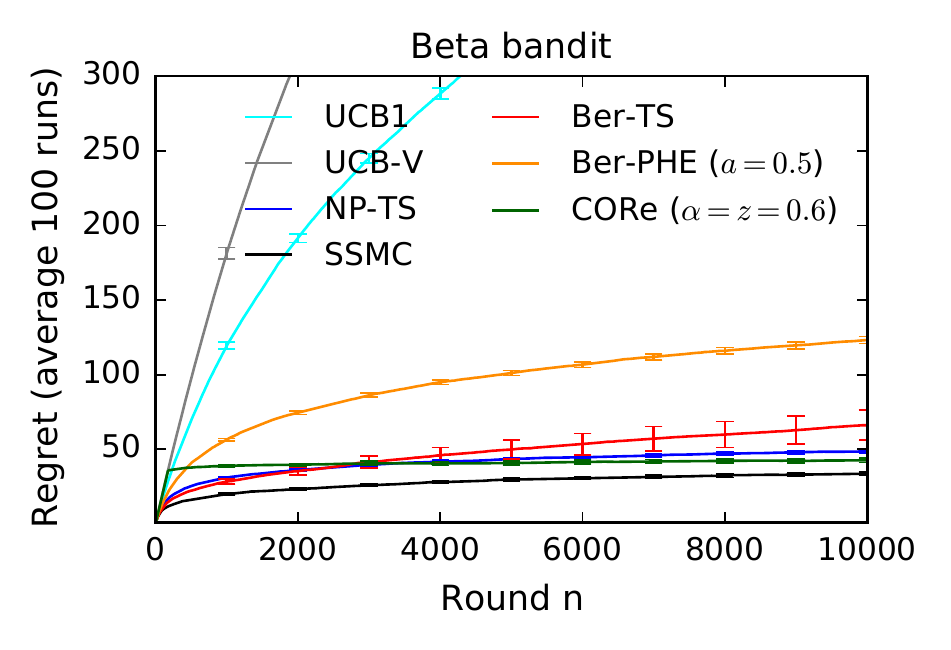}
    \includegraphics[width=0.32\textwidth]{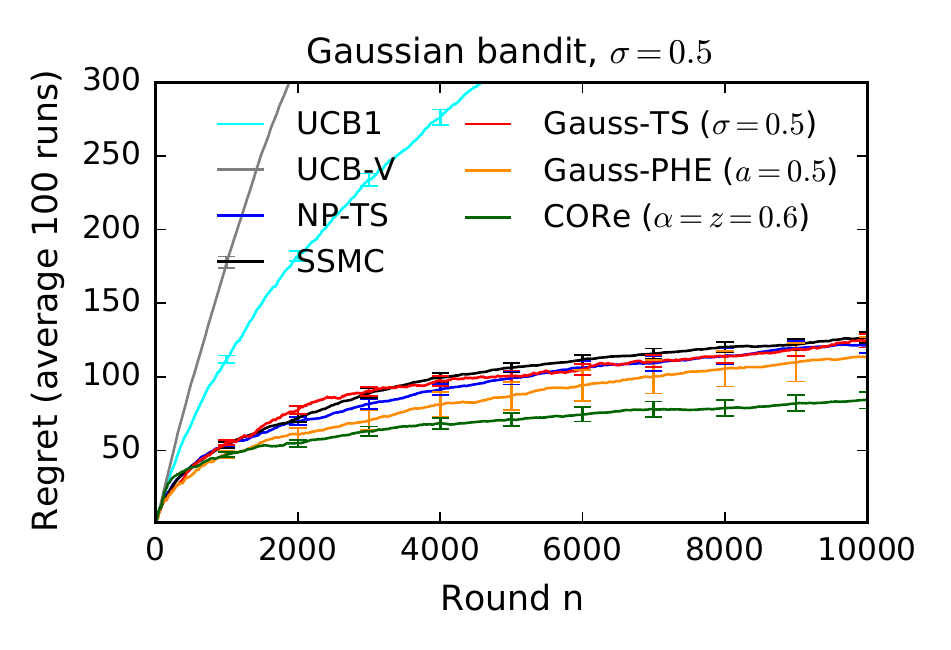}
    \caption{Comparison between \core and the baselines in Bernoulli, Beta and Gaussian multi-armed bandits. $x$-axis is the number of rounds. $y$-axis is the cumulative regret (lower the better) and is displayed on log scale to include all baselines. All results are averaged over 100 randomly chosen problems and error bars represent the standard deviation over the runs.}
    \label{fig:compare-baselines}
\end{figure*}

\section{Experiments}
\label{sec:experiments}

In this section, we evaluate our proposed algorithm empirically in both multi-armed bandits and linear bandits. In all experiments, we keep the notation \core to denote the proposed algorithm in a multi-armed bandit setting, and \lincore in the linear case. We compare it with several state-of-the-art baselines and show how it adapts to different problems without parameter tuning. In \cref{sec:multiarmed-bandit}, we evaluate \core in multi-armed bandit problems. We experiment with \lincore in linear bandit problems in \cref{sec:linear-bandit} and investigate the robustness of its parameters in \cref{sec:tuning}. Finally, we generalize \core to a learning to rank problem to evaluate its performance in real-world problems. 

\subsection{Multi-Armed Bandit}
\label{sec:multiarmed-bandit}
We evaluate \core in three classes of multi-armed bandit problems. The first class is Bernoulli bandits where $P_i=\mathrm{Ber}(\mu_i)$. The second class is beta bandits where $P_i=\mathrm{Beta}(v\mu_i,v(1-\mu_i))$ with $v=4$. The third class is Gaussian bandits where $P_i=\mathcal{N}(\mu_i,\sigma^2)$ with $\sigma=0.5$. Each bandit problem has $K=10$ arms and the mean rewards are chosen uniformly at random from $[0.25, 0.75]$. The horizon of each experiment is $n=10,000$ rounds. We experiment with $100$ randomly chosen problems in each class and report the average regret. 

We compare \core to six baselines: \ucb \citep{auer2002finite}, \ucbv \citep{audibert2009ucbv}, \ts \citep{agrawal2013furtherTS}, \phe \citep{kveton2019perturbedhistory}, \npts \citep{pmlr-v117-riou20a} and \ssmc \citep{chan2019multiarmed}. 
\ucbv can estimate the variance of the reward distribution based on the observed rewards, which automatically adapts to the variance. \npts and \ssmc are two non-parametric solutions proposed in the multi-armed bandit setting. In particular, \npts is a non-parametric randomized algorithm. At each step, it computes an average of the observed rewards
with random weights. \ssmc is a non-parametric arm allocation procedure inspired by sub-sampling approaches \cite{baransi2014sub}. For \ts, we use Bernoulli \ts (Ber-\ts) with a $\mathrm{Beta}(1,1)$ prior for Bernoulli and beta bandits. We use Gaussian \ts (Gauss-\ts) with a $\mathcal N(0.5, \sigma^2)$ prior for Gaussian bandits \citep{agrawal2013furtherTS}, where the parameter $\sigma$ is set to match the variance of the actual reward distribution. \phe belongs to the same class of bandit algorithms as \core that randomize the reward history for exploration. We do not further include Giro \citep{kveton2019garbage} as \phe explores similarly but in a more efficient way. We add Bernoulli pseudo rewards in \phe (Ber-\phe) in Bernoulli and beta bandits and set the parameter $a$ to values that achieve the best performance as reported in \citep{kveton2019perturbedhistory}. For Gaussian bandit, we add Gaussian pseudo rewards (Gauss-\phe) as suggested in the paper. We set the standard deviation of the Gaussian pseudo rewards to $0.5$ and tune parameter $a$ in the range of $[0.1, 2]$ with step size of $0.1$. 
 For \core, we fix the parameters $\alpha=z=0.6$ for all three classes of problems. 

Our results are reported in \cref{fig:compare-baselines}. We show the cumulative regret as a function of the number of rounds. \core achieves strong empirical performance that is comparable to or better than all the baselines. In particular, \core outperforms \ucb and \ucbv in all three classes of bandit problems. Although \ucbv estimates the variance of observed rewards to explore, it is too conservative and performs poorly in practice. \ts and \phe can have similar performance as \core, but the variance of the posterior (parameter $\sigma$) in \ts and the perturbation scale in \phe (parameter $a$) are tuned based on the knowledge of the specific bandit problems, which is usually not accessible in real-world scenarios. In contrast, \core consistently performs well in different problems without tuning the parameters. This is a significant advantage in real-world applications when the reward distribution is unknown. \npts and \ssmc also achieve strong performance in multi-armed bandits, but they do not generalize to structured problems as \core does. 


\begin{figure*}
    \centering
    \includegraphics[width=0.32\textwidth]{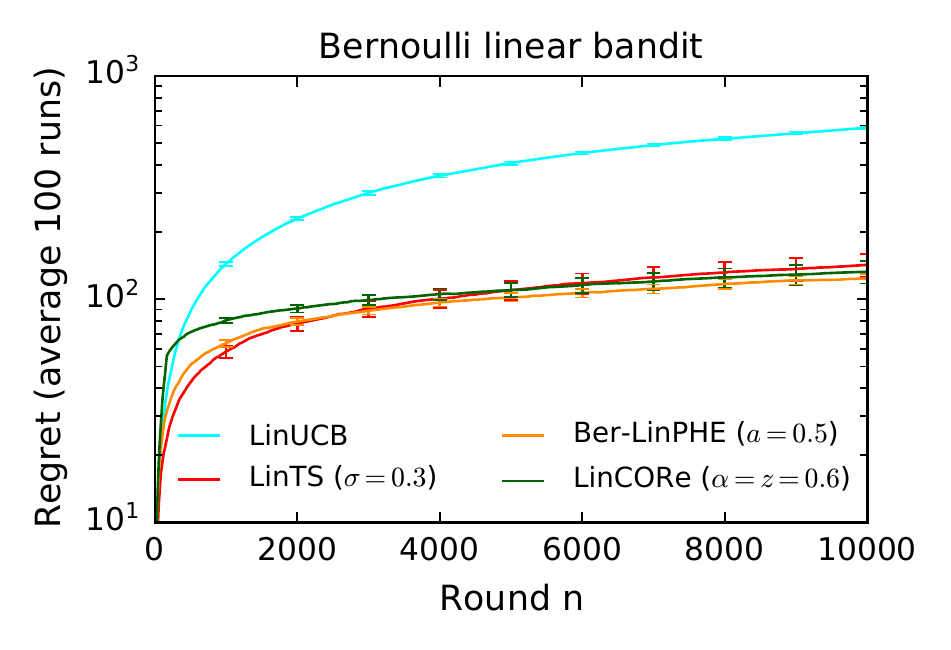}
    \includegraphics[width=0.32\textwidth]{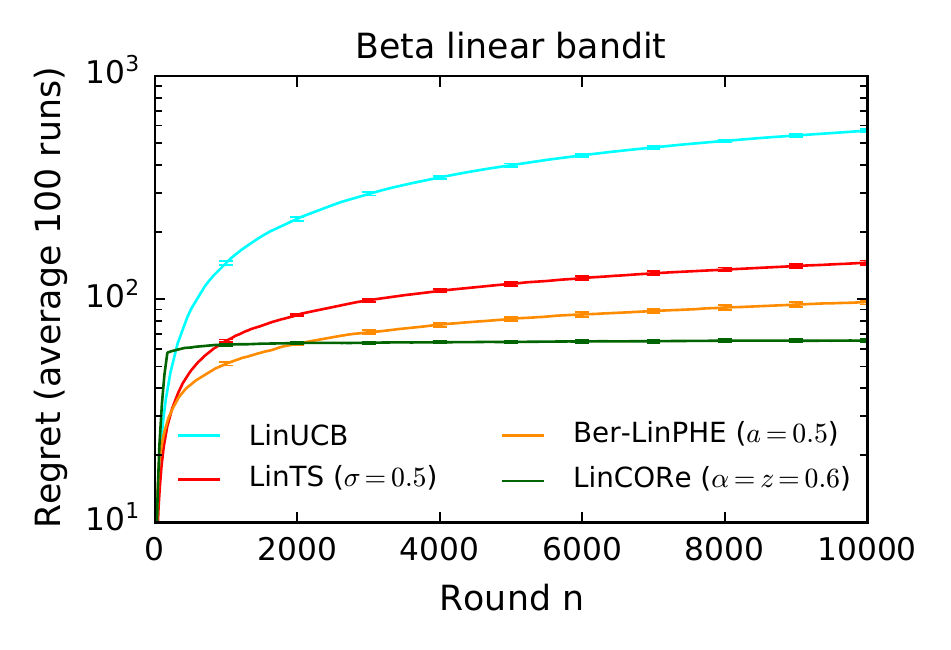}
    \includegraphics[width=0.32\textwidth]{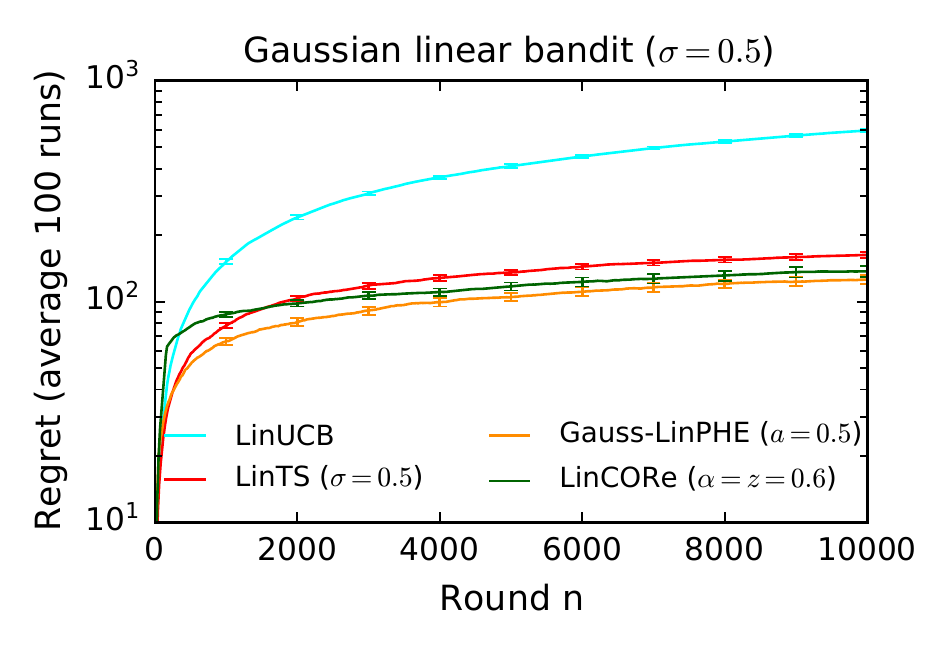}
    \caption{Comparison between \lincore and the baselines in linear bandits. The cumulative regret is displayed on log scale to include all baselines.}
    \label{fig:compare-baselines-linear}
\end{figure*}

\subsection{Linear Bandit}
\label{sec:linear-bandit}
We evaluate \lincore in several linear bandit problems. We set the number of arms to $K=50$ and the dimension of the feature vectors to $d=10$. We follow the generation of feature vectors and the parameter vector $\theta_\ast$ in \citep{kveton2020linphe} (see their Section 5.1). Following the experiments in \cref{sec:multiarmed-bandit}, we consider Bernoulli, Beta, and Gaussian reward distributions by setting the mean reward of each arm to $x_i^\top \theta_\ast \in [0, 1]$. The horizon of each experiment is $n=10,000$ rounds and we report the average results over 100 randomly chosen problems. 

We compare \lincore with \linucb \citep{yasin2011improved}, \lints \citep{agrawal2012thompson}, and \linphe \citep{kveton2020linphe}. There is no linear versions for \ucbv, \npts or \ssmc. For \linphe, we add Bernoulli pseudo rewards in Bernoulli and beta bandit (Ber-\linphe), and add Gaussian rewards in Gaussian bandit (Gauss-\linphe). The parameters for \lints and \linphe are searched in the range of $[0.1,2]$, while we still use the same parameters $\alpha=z=0.6$ for \lincore as in \cref{sec:multiarmed-bandit}. The results are shown in \cref{fig:compare-baselines-linear}. In all three classes of linear bandit problems, \lincore can achieve the best performance without tuning the parameters. Note that unlike \linucb and \lints, whose upper confidence sets and posterior need to be designed differently for multi-armed bandits and linear bandits, \lincore is simply applying the same randomization strategy to different bandit settings. Although \linphe is also a direct generalization of the multi-armed bandit setting, its perturbation from pseudo rewards depends on the knowledge of the arms' reward distribution. 

\begin{figure*}
    \centering
    \begin{subfigure}[b]{0.32\textwidth}
         \centering
         \includegraphics[width=\textwidth]{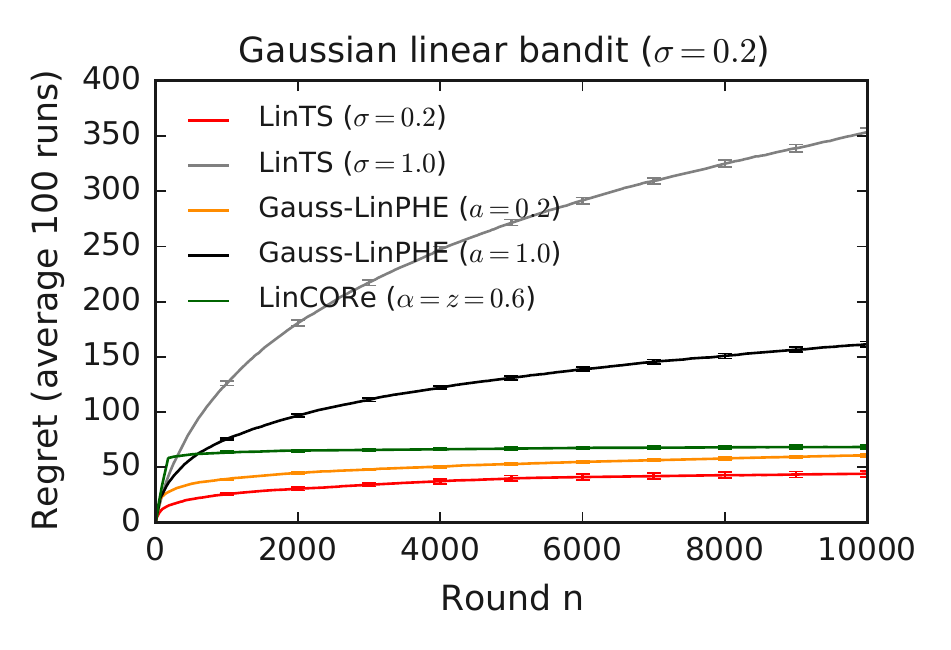}
         \caption{Easy problem}
         \label{fig:linear_easy}
    \end{subfigure}
    \begin{subfigure}[b]{0.32\textwidth}
         \centering
         \includegraphics[width=\textwidth]{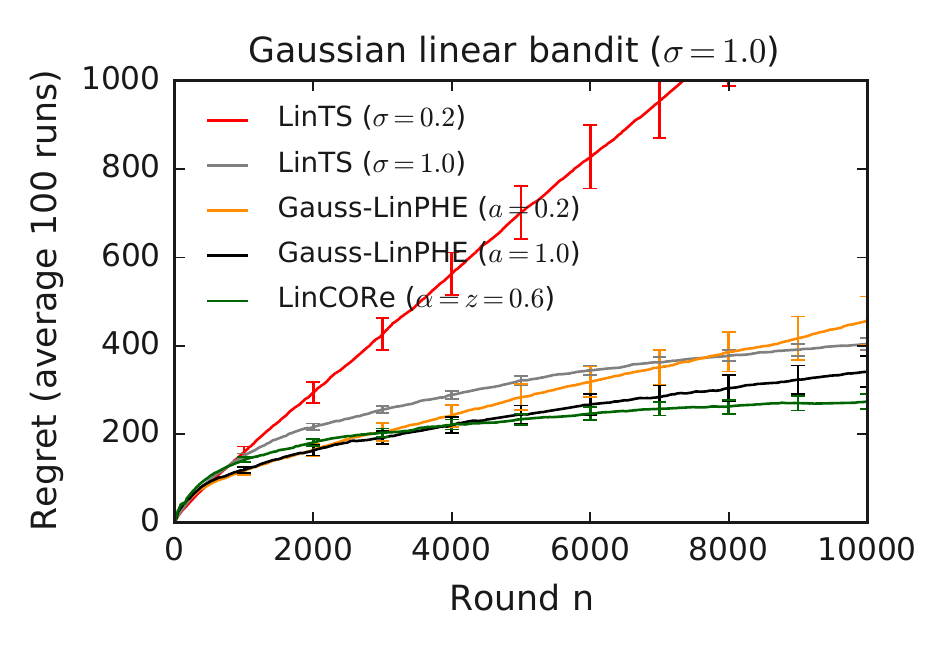}
         \caption{Hard problem}
         \label{fig:linear_hard}
    \end{subfigure}
    \begin{subfigure}[b]{0.32\textwidth}
         \centering
         \includegraphics[width=\textwidth,height=14em]{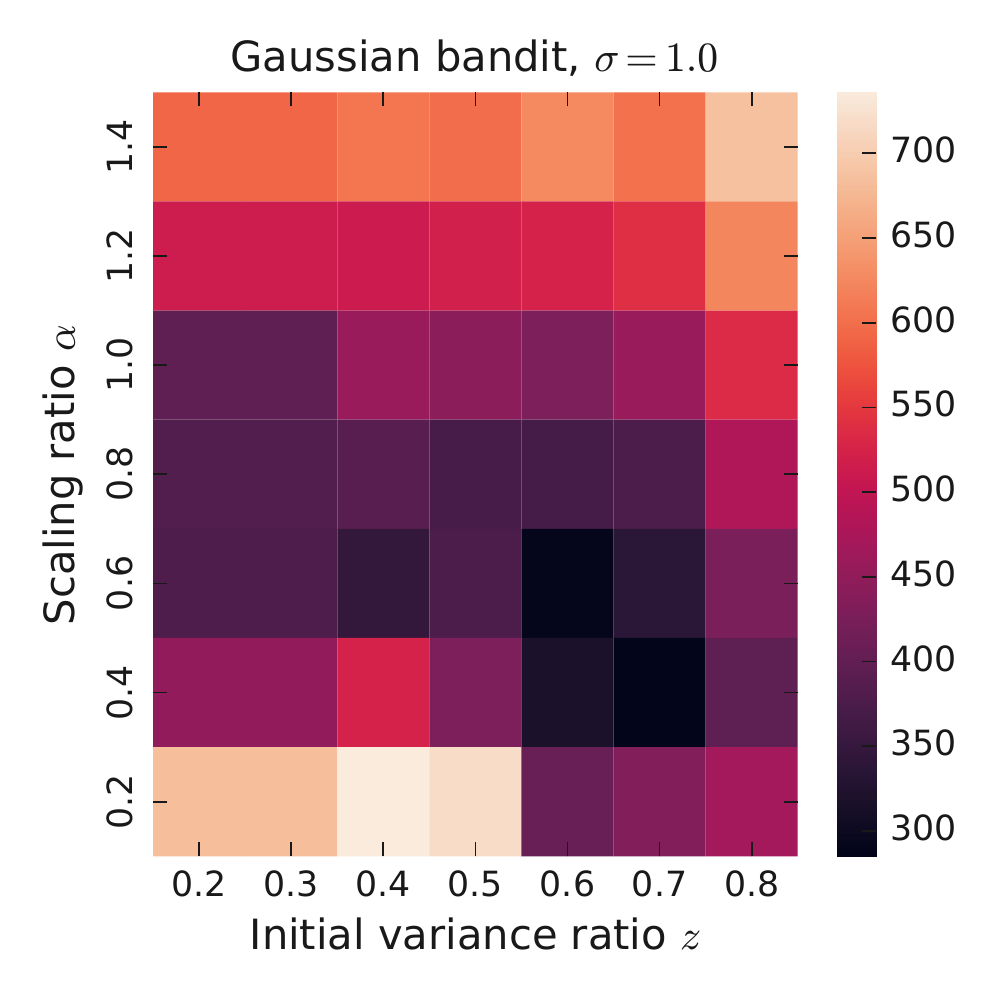}
         \caption{Tuning parameters of \lincore}
         \label{fig:linear_tune}
    \end{subfigure}
    \caption{In the first two figures, we compare \lincore, \linphe, and \lints in easy and hard problems. We fix the parameters of \lincore and tune the parameters of \linphe and \lints to perform well in either the easy or the hard case. In the third figure, we tune the parameters of \lincore and show its cumulative regret in 10,000 rounds. All results are averaged over 100 runs.}
    \label{fig:compare-baselines_gauss_cross_linear}
\end{figure*}

\subsection{Adaptation to Problem Hardness} 
\label{sec:tuning}
We further investigate how \lincore automatically adjusts its exploration in problems with different levels of hardness. Besides, we also show that \lincore works properly with a wide range of parameters. Specifically, we consider linear Gaussian bandits with different levels of variance. We set the standard deviation of the reward distributions to $\sigma=0.2$ as an easy problem, and set $\sigma=1$ as a hard problem. We compare \lincore to \lints and \linphe who achieve similar performance in \cref{sec:linear-bandit}. 
For \lints and \linphe, we use two sets of parameters for each of them, with each set specially tuned for either the easy or the hard problem. In particular, for \lints we set $\sigma$ to $0.2$ for the easy problem and $1.0$ for the hard problem that performs well in two problems correspondingly. In \linphe, we tune the parameter $a$ and set it to $0.2$ and $1.0$ for the easy and the hard problem, respectively. We still use the same fixed parameters as in \cref{sec:multiarmed-bandit,sec:linear-bandit} in \lincore for both problems. As shown in \cref{fig:linear_easy,fig:linear_hard}, \lincore is able to perform well in both easy and hard problems without tuning the parameters. For \lints and \linphe, they can achieve equally good performance as \lincore when the parameters are specially set for the problems. However, the parameters tuned for the easy problem under-explore in the hard problem and have almost linear regret. Similarly, the parameters tuned for the hard problem explore too much in the easy problem, and converge slowly.

We further tune the parameters $\alpha$ and $z$ of \lincore in the hard problem in \cref{fig:linear_tune} to see how it performs under different combinations of parameters. The results show that \lincore works well under a wide range of parameters and thus is easy to configure. For example, the area of $\alpha\in [0.4, 0.8]$ and $z\in [0.5,0.7]$ provides similarly competitive performance. When $\alpha$ and $z$ are too small, such as $\alpha = z = 0.2$, \lincore mainly exploits and explores too little to find the optimal arm. On the other hand, when $\alpha$ and $z$ are too large, such as $\alpha = 1.4$ and $z = 0.8$, it over-explores and suffers from high regret.
Moreover, it is worth noting that when setting $z$ to a large value, we have a large number of random pulls for initialization in order to have a high variance in the reward pool, which also leads to high regret in the early stage.

\begin{figure*}
    \centering
    \includegraphics[width=0.32\textwidth]{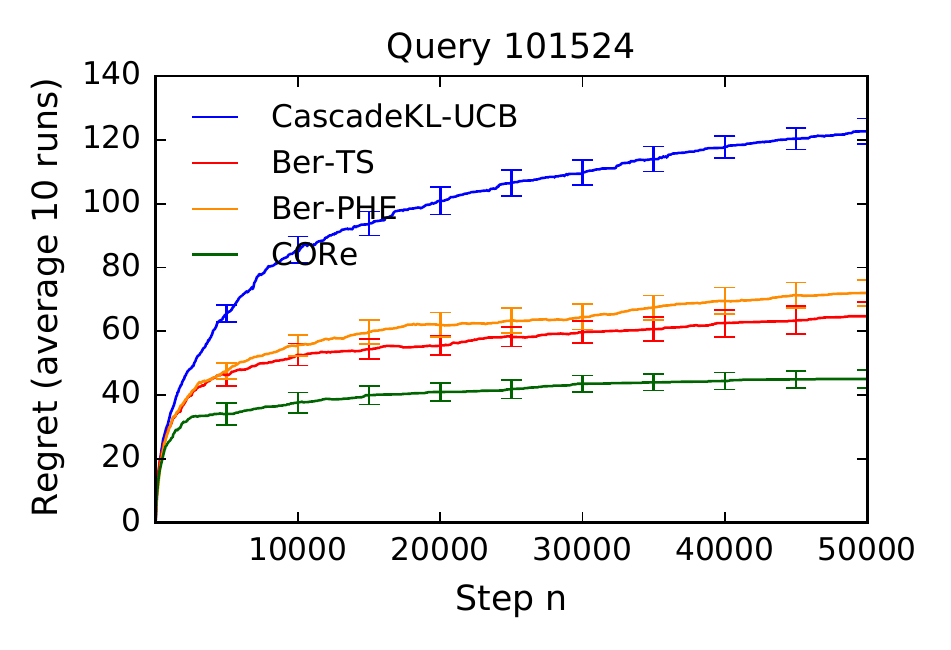}
    \includegraphics[width=0.32\textwidth]{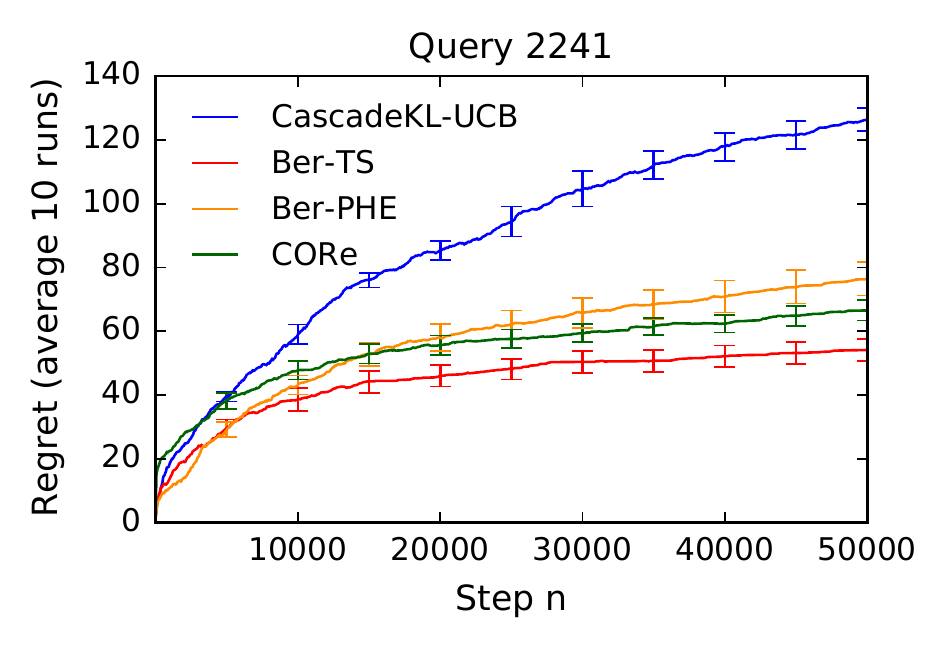}
    \includegraphics[width=0.32\textwidth]{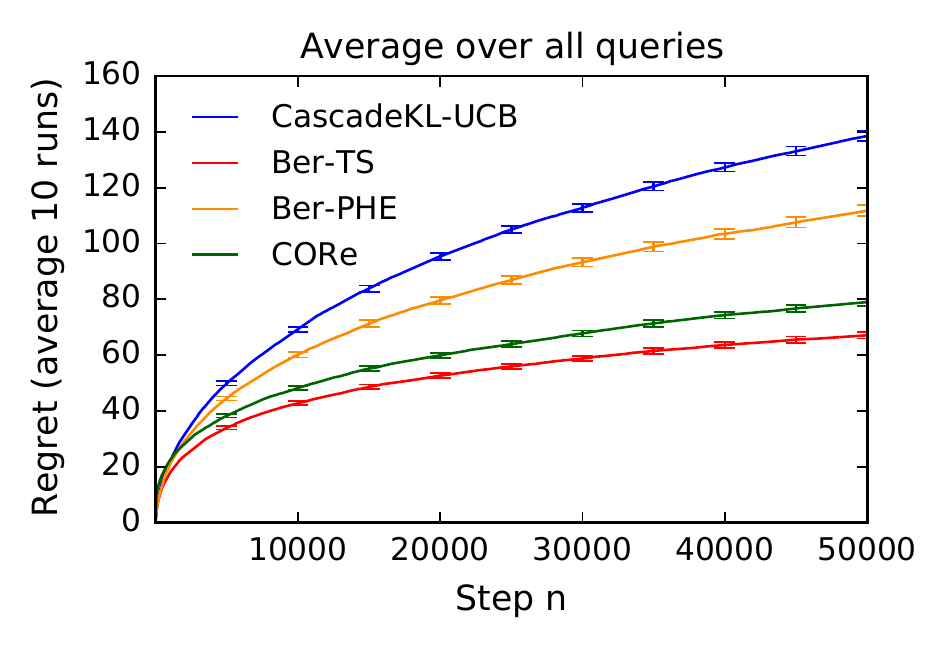}
    \caption{The cumulative regret of different algorithms in the learning to rank problem. The results are averaged over 10 runs per query. We sample two queries to demonstrate the performance in the first two figures, and display the results averaged over all queries in the third figure.}
    \label{fig:learning2rank}
\end{figure*}

\subsection{Online Learning to Rank}
\label{sec:l2r}

We finally evaluate \core in a real-world problem, online learning to rank \citep{liu2009l2r, radlinski2008diverse}. Online learning to rank is a sequential decision-making problem where the learning agent repeatedly recommends a list of items. In round $t$, the learning agent recommends a ranked list of $K$ items out of all $L \geq K$ items. The user clicks on the recommended items. The clicks are treated as bandit feedback. The performance of the agent is measured by the expected cumulative regret, which is the expected loss in clicks relatively to the optimal ranking.

We experiment with the \emph{Yandex} dataset and follow the experimental setup as in \citep{zoghi2017online, lattimore2018toprank}. In each query, the user is shown 10 documents and the search engine records clicks of the user. We use the $60$ most frequent queries from the dataset and learn their cascade models (CM) with PyClick \citep{Chuklin2015ClickMF}. The goal of the learning agent is to rerank $L=10$ most attractive items to maximize the expected number of clicks at the first $K=5$ positions. The application of bandit algorithms is similar as in the multi-armed bandit setting, despite that the agent will rank the items based on their mean reward estimates rather than selecting a single item. The corresponding cascade model learned under each query is used to generate clicks. We experiment with a horizon of $n=50,000$ rounds and the regret is averaged over $10$ runs. 


We compare \core to CascadeKL-UCB \citep{kveton2015cascading}, which is specifically designed for online learning to rank in the cascade model. We also evaluate Ber-\ts and Ber-\phe in this problem. They are applied in the same way as CascadeKL-UCB, with the UCB of each item replaced by its \ts or \phe mean reward estimate. TopRank \citep{lattimore2018toprank} is another algorithm for online learning to rank based on topological sort, but it is known to perform worse than CascadeKL-UCB and thus we do not include it. We still use the default parameters for \core ($\alpha=z=0.6$) and set $a=0.5$ in Ber-\phe. The results are presented in \cref{fig:learning2rank}, where we show the results under two specific queries in the first two figures and show the average performance over all queries in the third figure. Under the default parameters, \core already achieves competitive performance that consistently outperforms Ber-\phe and CascadeKL-UCB across queries, and is comparable to Ber-\ts. We also observed further improvement of \core if tuned, to $\alpha = z = 0.4$, which achieves almost the same performance as Ber-\ts when averaging over all queries. 
The promising results from this experiment demonstrate the wide applicability and robustness of \core in real-world structured problems, and its ability in solving a new problem without prior knowledge.


\section{Related Work}
\label{sec:related work}

The key to statistically-efficient exploration in stochastic bandits is to perturb the mean reward estimates of arms sufficiently. Algorithms based on upper confidence bounds (UCBs) \citep{auer2002finite,yasin2011improved} perturb the mean reward estimates by adding confidence intervals to them. The confidence intervals are constructed by theory. Although theoretically optimal, they are often conservative in practice, because they are designed for hardest problem instances. \ucbv \citep{audibert2009ucbv} is a variant of \ucb that adapts confidence intervals using an empirical estimate of the variance from observed rewards. This algorithm also tends to be conservative in practice, as we show in \cref{sec:multiarmed-bandit}.

Posterior sampling \citep{thompson1933on,agrawal2013furtherTS} introduces variance in mean reward estimates by sampling from posterior distributions. To be statistically efficient, proper variance needs to be specified in the posterior updates, which is often unknown in real-world problems. As we show in \cref{sec:tuning}, when the variance of the posterior in Gauss-\lints is mis-specified, the algorithm suffers from high regret, due to either under- or over-exploration. \core is closely related to posterior sampling in Gaussian bandits (\cref{sec:idea}). However, instead of relying on knowing the variance of reward distributions, it utilizes the randomness in the agent's observed rewards, to have its data-dependent exploration that adapts to problem hardness. 

Randomized exploration algorithms, such as \giro \citep{kveton2019garbage} and \phe \citep{kveton2019perturbedhistory}, add pseudo-rewards to the reward history and use the perturbed mean reward estimates for arm selection. The added pseudo-rewards add sufficient variance for exploration and lead to provably sublinear regret in multi-armed bandits. However, similarly to UCB designs and posterior sampling, the right amount of perturbation is needed to explore at a near-optimal rate. In contrast, instead of adding external noise from pseudo rewards, \core samples from the agent's past observed rewards to induce exploration. This provides sufficient variance when all reward distributions have comparable variance and we analyze \lincore theoretically in the setting of identical Gaussian noise. 

The idea of efficient exploration with no prior knowledge on the arms' distribution has emerged in recent years. Non-parametric solutions have been proposed in the multi-armed bandit setting. The most representative works are non-parametric Thompson sampling (\npts) \citep{pmlr-v117-riou20a} and subsample-mean comparison (\ssmc) \citep{chan2019multiarmed}. Specifically, \npts proposes a generalization of the Bernoulli Thompson sampling to multinomial distributions, and a non-parametric adaption of this algorithm. \ssmc is inspired from the sub-sampling approaches \citep{baransi2014subsampling} and is asymptotically optimal for exponential families of distributions. We compare them with \core in the multi-armed bandit setting in \cref{sec:multiarmed-bandit}. However, it is unclear how to generalize \npts and \ssmc to linear bandits.  

\section{Conclusions}
\label{sec:conclusions}

We propose a new online algorithm, capitalizing on rewards (\core), that explores by utilizing the randomness of the agent's past observed rewards. In particular, \core sample rewards from a well designed reward pool from the agent's past observations to perturb the reward histories. The variance introduced by sampled rewards automatically adapts to the noise of the reward distributions. Thus \core can impose proper exploration in different problems without parameter tuning. We prove a $\tilde O(d\sqrt{n\log K})$ gap-free bound on the $n$-round regret of \core in a stochastic linear bandit. Our empirical evaluation shows that \core achieves competitive performance in various problems.   

\core is general enough to be applied to different structured problems, such as generalized linear bandits \citep{filippi2010parametric} or neural bandits \citep{zhou2020neural}. The randomization strategy remains the same for different problems. We analyze the regret of \core in a linear Gaussian bandit. Our analysis is under the assumption that the reward distributions of all arms have the same variance. An interesting future direction is a more general analysis of \core. 

Finally, we also believe that \core can be further extended by other randomization designs, with the essential idea of capitalizing on the randomness in the agent's observed rewards and being fully data-dependent. For example, we can dynamically exchange rewards among arms with certain probability and keep the exchanged rewards in the arm's history along the $n$-round game. This can greatly improve the efficiency of sampling i.i.d.\ rewards from the reward pool in every single round. We have observed promising empirical performance of such algorithms and leave their more detailed study for future work.

\bibliography{references}

\clearpage
\onecolumn
\appendix

\section{Proofs}
\label{sec:proofs}

The analysis is organized as follows. In \cref{sec:background}, we provide necessary technical background. In \cref{sec:regret bound}, we state and prove our regret bound. In \cref{sec:regret lemmas}, we present and prove key lemmas used in the regret bound. In \cref{sec:reward pool lemmas}, we prove two key lemmas that characterize sufficient exploratory properties of the reward pool.

\subsection{Background}
\label{sec:background}

For an event $E$, $\mathbbm{1}\{E\}=1$ if $E$ occurs and $\mathbbm{1}\{E\}=0$ otherwise. A random variable $X$ is $\rho^2$-sub-Gaussian if $\E{\exp(\lambda(X-\E X))}\leq \exp(\lambda^2\rho^2/2)$ for any $\lambda>0$. Let $L=\max_{i\in[K]}||x_i||_2$ and $L_\ast = ||\theta_*||_2$ be the maximum $l_2$ norm of feature vectors and the $l_2$ norm of the parameter vector, respectively.

By definition, $Y_\ell - X_\ell\T \theta_* \sim \cN(0, \sigma^2)$. We denote the $\ell$-th drawn reward from the reward pool by $Z_\ell$. We assume that $\abs{Z_\ell} \leq a$ almost surely and $\var{Z_\ell} \geq \eta^2$. We instantiate $a$ and $\eta$ in \cref{sec:regret bound}. We denote by
\begin{align}
  \hat{\theta}_t
  = G_t^{-1} \sum_{\ell = 1}^{t - 1} X_\ell Y_\ell
  \label{eq:theta hat}
\end{align}
the parameter vector estimated from rewards $Y_\ell$ and by
\begin{align}
  \tilde{\theta}_t
  = G_t^{-1} \sum_{\ell = 1}^{t - 1} X_\ell (Y_\ell + Z_\ell)
  \label{eq:theta tilde}
\end{align}
the parameter vector estimated from perturbed rewards $Y_\ell + Z_\ell$.

Let $\cF_t = \sigma(I_1, \dots, I_t, Y_{I_1, 1}, \dots, Y_{I_t, t})$ be the $\sigma$-algebra generated by the pulled arms and their rewards by the end of round $t \in [n] \cup \set{0}$. We define $\cF_0 = \set{\emptyset, \Omega}$, where $\Omega$ is the sample space of the probability space that holds all random variables. We denote by $\probt{\cdot} = \condprob{\cdot}{\cF_{t - 1}}$ and $\Et{\cdot} = \condE{\cdot}{\cF_{t - 1}}$ the conditional probability and expectation operators, respectively, given the past at the beginning of round $t$. Let $\normw{x}{M} = \sqrt{x\T M x}$. Let
\begin{align}
  E_{1, t}
  = \set{\forall i \in [K]:
  \abs{x_i\T \hat{\theta}_t - x_i\T \theta_\ast} \leq
  c_1 \normw{x_i}{G_t^{-1}}}
  \label{eq:theta hat is close}
\end{align}
be the event that $\hat{\theta}_t$ is \say{close} to $\theta_\ast$ in round $t$, where $\hat{\theta}_t$ is defined in \eqref{eq:theta hat} and $c_1 > 0$ is tuned such that $\bar{E}_{1, t}$, the complement of $E_{1, t}$, is unlikely. Let $E_1 = \bigcap_{t = d + 1}^n E_{1, t}$ and $\bar{E}_1$ be its complement. Let
\begin{align}
  E_{2, t}
  = \set{\forall i \in [K]:
  \abs{x_i\T \tilde{\theta}_t - x_i\T \hat{\theta}_t}
  \leq c_2 \normw{x_i}{G_t^{-1}}}
  \label{eq:theta tilde is close}
\end{align}
be the event that $\tilde{\theta}_t$ is \say{close} to $\hat{\theta}_t$ in round $t$, where $\tilde{\theta}_t$ is defined in \eqref{eq:theta tilde} and $c_2 > 0$ is tuned such that $\bar{E}_{2, t}$, the complement of $E_{2, t}$, is unlikely given any past.

Our bound involves three probability constants. The first constant, $p_1$, is an upper bound on the probability of event $\bar{E}_1$, that is $p_1 \geq \prob{\bar{E}_1}$. The second constant, $p_2$, is an upper bound on the probability of event $\bar{E}_{2, t}$ given any past,
\begin{align}
  \probt{\bar{E}_{2, t}}
  \leq p_2\,.
  \label{eq:p2}
\end{align}
The last constant, $p_3$, is a lower bound on the probability that the optimal arm $1$ is optimistic given any past,
\begin{align}
  \probt{x_1\T \tilde{\theta}_t - x_1\T \hat{\theta}_t > c_1 \normw{x_1}{G_t^{-1}}}
  \geq p_3\,.
  \label{eq:p3}
\end{align}
Using the above notation, we restate the general regret bound for linear bandits of \citet{kveton2020linphe}.

\begin{theorem}
\label{thm:lin upper bound} Let $c_1, c_2 \geq 1$. Let $A$ be any algorithm that pulls arm $I_t = \argmax_{i \in [K]} x_i\T \tilde{\theta}_t$ in round $t$, where $\tilde{\theta}_t$ is estimated from past data. Let the mean rewards be in $[0, 1]$; $p_1$, $p_2$, and $p_3$ be defined as above; and $p_3 > p_2$. Then the expected $n$-round regret of $A$ is bounded as
\begin{align*}
  R(n)
  \leq (c_1 + c_2) \left(1 + \frac{2}{p_3 - p_2}\right) \sqrt{c_3 n} + (p_1 + p_2) n + d\,,
\end{align*}
where $c_3 =  2 d \log(1 + n L^2 / (d \lambda))$.
\end{theorem}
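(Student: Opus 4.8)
The plan is to prove this as a \emph{reduction}: decompose the $n$-round regret over the events $E_1$ and $E_{2,t}$ set up above, dispatch the failure events trivially, and control the good-event contribution by combining an optimism argument with the elliptical potential (determinant--trace) inequality. Throughout I write $W_t = \normw{x_{I_t}}{G_t^{-1}}$ for the width of the pulled arm, and I use that $E_{1,t}$ is $\cF_{t-1}$-measurable (since $\hat\theta_t$ and $G_t$ depend only on the past), whereas $E_{2,t}$ and the optimism of arm $1$ depend on the fresh perturbations $(Z_\ell)$ drawn in round $t$.

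First I would split off the initialization. Since every mean reward lies in $[0,1]$, each round costs at most $1$, so the first $d$ rounds (on which $E_1 = \bigcap_{t=d+1}^n E_{1,t}$ places no constraint) contribute at most $d$. For the remaining rounds I decompose
\begin{align*}
  \E{\sum_t \Delta_{I_t}}
  &\leq \E{\sum_t \Delta_{I_t}\,\I{E_1}\I{E_{2,t}}}
  + \E{\sum_t \Delta_{I_t}\,\I{\bar E_1}}
  + \E{\sum_t \Delta_{I_t}\,\I{E_1}\I{\bar E_{2,t}}}.
\end{align*}
The two failure terms are immediate: using $\Delta_{I_t}\leq 1$, the second is at most $n\,\prob{\bar E_1}\leq n p_1$, and the third, after conditioning on $\cF_{t-1}$ and invoking \eqref{eq:p2}, is at most $\sum_t \E{\probt{\bar E_{2,t}}}\leq n p_2$. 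Together with the initialization this yields the $(p_1 + p_2) n + d$ part of the bound.

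The heart of the proof is the good-event term. On $E_{1,t}\cap E_{2,t}$, chaining the two closeness guarantees gives $x_{I_t}\T\theta_\ast \geq x_{I_t}\T\tilde\theta_t - (c_1 + c_2) W_t$, and since $I_t$ maximizes $x_i\T\tilde\theta_t$ one obtains the per-round bound
\begin{align*}
  \Delta_{I_t}
  \leq \big(\mu_1 - x_{I_t}\T\tilde\theta_t\big)^+ + (c_1 + c_2)\, W_t .
\end{align*}
The second summand is handled directly: summing over rounds, Cauchy--Schwarz together with the elliptical potential lemma gives $\sum_t W_t \leq \sqrt{n\sum_t W_t^2}\leq \sqrt{c_3 n}$, contributing the $(c_1+c_2)\sqrt{c_3 n}$ term (the \emph{$1$} in the factor $1 + \tfrac{2}{p_3-p_2}$). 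The first summand is the \emph{optimism deficit}, which I would control with the key lemma: conditioned on the past and on $E_{1,t}$,
\begin{align*}
  \Et{\big(\mu_1 - x_{I_t}\T\tilde\theta_t\big)^+\,\I{E_{2,t}}}
  \leq \frac{2(c_1 + c_2)}{p_3 - p_2}\,\Et{W_t\,\I{E_{2,t}}} .
\end{align*}
Its mechanism: on $E_{1,t}$, the defining property \eqref{eq:p3} of $p_3$ forces $x_1\T\tilde\theta_t \geq \mu_1$ with conditional probability at least $p_3$, so some arm is optimistic; intersecting with $E_{2,t}$ leaves a \emph{good optimistic} event of conditional probability at least $p_3 - p_2$ on which the deficit vanishes and $\Delta_{I_t}\leq (c_1+c_2) W_t$. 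Comparing the pulled-arm width on this event against the width over all samples inverts the net probability $p_3-p_2$ and converts the deficit into a multiple of $W_t$. Plugging this back, summing, and reusing the elliptical-potential bound produces the remaining $\tfrac{2}{p_3-p_2}(c_1+c_2)\sqrt{c_3 n}$.

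I expect the optimism amplification lemma to be the main obstacle. The subtle point is that the deficit naturally involves the width $\normw{x_1}{G_t^{-1}}$ of the \emph{optimal} arm, whose per-round squared widths are \emph{not} summable by the elliptical potential lemma (arm $1$ need not be pulled), so the argument must instead re-express the deficit through the width $W_t$ of the \emph{pulled} arm. Making this exchange rigorous is exactly where the hypothesis $p_3 > p_2$ and the factor $1/(p_3-p_2)$ enter, and where care is needed to keep the comparison measurable with respect to $\cF_{t-1}$ and to track the constants $c_1, c_2$.
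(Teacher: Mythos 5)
Two remarks before the substance: the paper itself does not prove \cref{thm:lin upper bound} --- it is explicitly \emph{restated} from \citet{kveton2020linphe}, whose proof in turn follows the linear Thompson sampling analysis of \citet{agrawal2012thompson} --- so your attempt has to be measured against that source. At the level of architecture you reproduce it faithfully: the trivial $+d$ for the first rounds, the two failure terms bounded by $n p_1$ and, after conditioning on $\cF_{t-1}$ and invoking \eqref{eq:p2}, by $n p_2$; the good-event chain $\Delta_{I_t} \leq (\mu_1 - x_{I_t}\T \tilde\theta_t)^+ + (c_1+c_2) W_t$; and Cauchy--Schwarz plus the elliptical potential lemma giving $\sum_t W_t \leq \sqrt{c_3 n}$. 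Your target per-round lemma even recovers the constant $(c_1+c_2)\left(1 + \frac{2}{p_3-p_2}\right)$ exactly.

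However, the one load-bearing step --- your ``optimism amplification'' lemma --- is asserted with a mechanism that does not prove it. That the deficit $(\mu_1 - x_{I_t}\T\tilde\theta_t)^+$ \emph{vanishes} on an event of conditional probability at least $p_3 - p_2$ says nothing about its conditional expectation on the complement, and ``comparing the pulled-arm width on this event against the width over all samples'' is not yet an argument: on the good optimistic event the pulled arm's width $W_t$ can be far smaller than $\normw{x_1}{G_t^{-1}}$, so the naive exchange of $\normw{x_1}{G_t^{-1}}$ for $\Et{W_t}/(p_3-p_2)$ fails --- exactly the obstacle you flag at the end without resolving. The missing device is the $\cF_{t-1}$-measurable least-width \emph{unsaturated} arm of Agrawal--Goyal. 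Let $U_t = \set{i \in [K] : \Delta_i \leq (c_1+c_2) \normw{x_i}{G_t^{-1}}}$ (which always contains arm $1$) and $\bar{i}_t = \argmin_{i \in U_t} \normw{x_i}{G_t^{-1}}$. On the good event (arm $1$ optimistic in the sense of \eqref{eq:p3}, intersected with $E_{1,t} \cap E_{2,t}$), every saturated arm $i$ satisfies $x_i\T\tilde\theta_t \leq \mu_i + (c_1+c_2)\normw{x_i}{G_t^{-1}} < \mu_1 \leq x_1\T\tilde\theta_t$, so the maximizer $I_t$ lies in $U_t$ and $W_t \geq \normw{x_{\bar{i}_t}}{G_t^{-1}}$; since $\normw{x_{\bar{i}_t}}{G_t^{-1}}$ is $\cF_{t-1}$-measurable, this yields $\Et{W_t \I{E_{2,t}}} \geq (p_3 - p_2)\, \normw{x_{\bar{i}_t}}{G_t^{-1}}$. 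Meanwhile, pointwise on $E_{1,t} \cap E_{2,t}$, $\max_i x_i\T\tilde\theta_t \geq x_{\bar{i}_t}\T\tilde\theta_t \geq \mu_{\bar{i}_t} - (c_1+c_2)\normw{x_{\bar{i}_t}}{G_t^{-1}} \geq \mu_1 - 2(c_1+c_2)\normw{x_{\bar{i}_t}}{G_t^{-1}}$, using unsaturation of $\bar{i}_t$ in the last step, so the deficit is at most $2(c_1+c_2)\normw{x_{\bar{i}_t}}{G_t^{-1}}$. Chaining the two displays gives precisely your lemma with constant $2(c_1+c_2)/(p_3-p_2)$, and with this insertion your proposal becomes complete and coincides with the cited proof.
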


\subsection{Regret Bound}
\label{sec:regret bound}

We prove our regret bound by instantiating \cref{thm:lin upper bound}. In summary, we have that
\begin{align*}
  c_1
  = \tilde{O}(\sqrt{d})\,, \quad
  c_2
  = \tilde{O}(\sqrt{d \log K})\,, \quad
  p_1
  = O(1 / n)\,, \quad
  p_2
  = O(1 / n)\,, \quad
  \frac{1}{p_3 - p_2}
  = \tilde{O}(1)\,.
\end{align*}
Therefore, \cref{thm:lin upper bound} yields the following regret bound.

\lincoreregret*
\begin{proof}
In the first $\max\big\{d,\frac{4\log n}{z-1-\log z}+1\big\}$ rounds, we bound the regret trivially. After these initial rounds, the bounds in \cref{lemma:init-variance,lemma:bound_reward} hold jointly with probability at least $1 - 2 / n$ over all remaining rounds. Since the bounds fail with probability at most $2 / n$, the expected $n$-round regret due to the failures is at most $2$. So, the regret due to the initialization and the bound failures is $\tilde{O}(d)$, and is subsumed by the $\tilde{O}(d \sqrt{n \log K})$ term in the regret bound.

Now we focus on instantiating \cref{thm:lin upper bound}. First, we set $c_1$ as in \cref{lem:theta hat concentration}. For $\delta = 1 / n$, we have
\begin{align*}
  c_1
  = \sigma \sqrt{d \log(n + n^2 L^2 / (d \lambda))} +
  \lambda^\frac{1}{2} L_\ast
\end{align*}
and $p_1 = 1 / n$. Then we set
\begin{align*}
  c_2
  = \sqrt{2 a^2 \log(K n^4)}\,.
\end{align*}
By \cref{lem:theta tilde concentration} for $c = c_2$, we have that $p_2 = 1 / n^4$. Finally, we set $p_3$ using \cref{lem:theta tilde anti-concentration}. In particular,
\begin{align*}
  p_3
  = \frac{1}{16 \log n}
  \left[\frac{\eta^2}{a^2} \left(1 - \frac{\lambda}{\lambda_{\min}(G_{d + 1})}\right) -
  \frac{c^2}{a^2} - \frac{2}{n^3}\right]
\end{align*}
and we work out a nicer algebraic form in the rest of the proof. First, we set $\lambda = \lambda_{\min}(G_{d + 1}) / 4$. Note that this is well defined since $G_{d + 1}$ is deterministic. For this setting,
\begin{align*}
  p_3
  = \frac{1}{16 \log n}
  \left[\frac{3}{4} \frac{\eta^2}{a^2} - \frac{c^2}{a^2} - \frac{2}{n^3}\right]\,.
\end{align*}
Now we set $\eta$ and $a$ as in \lincore (\cref{lemma:init-variance,lemma:bound_reward}),
\begin{align*}
  \eta^2
  = \frac{z}{2} \alpha^2 \sigma^2\,, \quad
  a
  = 4 \sqrt{\alpha^2 \sigma^2 \log n} + \alpha\,.
\end{align*}
Now we set $c = c_1$ and $\eta^2 = 2 c_1^2$, which means that $\alpha^2 = \frac{4}{z} \sigma^{-2} c_1^2$. Then
\begin{align*}
  p_3
  = \frac{1}{32 \log n}
  \left[\frac{c_1^2}{a^2} - \frac{4}{n^3}\right]\,.
\end{align*}
Under the assumption that $4 \sqrt{\sigma^2 \log n} \geq 1$, we have $a^2 \leq 64 \alpha^2 \sigma^2 \log n$. Moreover, for the above setting of $\alpha^2$, we have that $a^2 \leq 256 z^{-1} c_1^2 \log n$. Thus
\begin{align*}
  p_3
  \geq \frac{1}{32 \log n}
  \left[\frac{z}{256 \log n} - \frac{4}{n^3}\right]\,.
\end{align*}
Finally, since $p_2 = 1 / n^4$, we have that
\begin{align*}
  p_3 - p_2
  \geq \frac{1}{32 \log n}
  \left[\frac{z}{256 \log n} - \frac{4}{n^3} - \frac{32 \log n}{n^4}\right]
  \geq \frac{1}{32 \log n} \left[\frac{z}{256 \log n} - \frac{36}{n^3}\right]\,.
\end{align*}
Now note that for $z \geq 1 / 2$, we have $36 / n^3 \leq z / (512 \log n)$ for $n \geq 24$. This means that $1 / (p_3 - p2) = \tilde{O}(1)$ for $n \geq 24$. This concludes the proof.
\end{proof}

\subsection{Regret Lemmas}
\label{sec:regret lemmas}

A standard concentration lemma is below.

\begin{lemma}[Least-squares concentration]
\label{lem:theta hat concentration} For any $\lambda > 0$, $\delta > 0$, and
\begin{align*}
  c_1
  = \sigma \sqrt{d \log \left(\frac{1 + n L^2 / (d \lambda)}{\delta}\right)} +
  \lambda^\frac{1}{2} L_\ast\,,
\end{align*}
event $E_1$ occurs with probability at least $1 - \delta$.
\end{lemma}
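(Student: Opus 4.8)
The plan is to recognize \cref{lem:theta hat concentration} as the standard self-normalized confidence bound for regularized least squares in linear bandits, following \citet{yasin2011improved}. The starting point is to decompose the estimation error of $\hat{\theta}_t$ in \eqref{eq:theta hat} into a regularization bias and a noise term. Writing $Y_\ell = X_\ell\T \theta_\ast + \xi_\ell$ with $\xi_\ell \sim \cN(0, \sigma^2)$ and $S_t = \sum_{\ell = 1}^{t - 1} X_\ell X_\ell\T = G_t - \lambda I_d$, a direct calculation gives
\[
  \hat{\theta}_t - \theta_\ast
  = -\lambda G_t^{-1} \theta_\ast + G_t^{-1} \sum_{\ell = 1}^{t - 1} X_\ell \xi_\ell,
\]
so that for each arm $i$, projecting onto $x_i$ and applying the Cauchy--Schwarz inequality in the $G_t^{-1}$-inner product yields
\[
  \abs{x_i\T \hat{\theta}_t - x_i\T \theta_\ast}
  \leq \normw{x_i}{G_t^{-1}}
  \left(\lambda \normw{\theta_\ast}{G_t^{-1}} + \Big\| \textstyle\sum_{\ell = 1}^{t - 1} X_\ell \xi_\ell \Big\|_{G_t^{-1}}\right).
\]
The crucial feature is that $\normw{x_i}{G_t^{-1}}$ factors out cleanly, so the same constant $c_1$ will bound the error for \emph{every} arm $i$ simultaneously; there is no need for a union bound over arms.

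Next I would handle the two terms separately. For the bias term, since $G_t \succeq \lambda I_d$ we have $\normw{\theta_\ast}{G_t^{-1}}^2 = \theta_\ast\T G_t^{-1} \theta_\ast \leq \lambda^{-1} \norm{\theta_\ast}^2$, which contributes the $\lambda^{1/2} L_\ast$ term of $c_1$. The noise term is where the probabilistic content lives: $\sum_{\ell = 1}^{t - 1} X_\ell \xi_\ell$ is a martingale, with $X_\ell$ predictable with respect to $\cF_{\ell - 1}$ and $\xi_\ell$ conditionally $\sigma^2$-sub-Gaussian (Gaussian noise is exactly $\sigma^2$-sub-Gaussian). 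I would invoke the self-normalized tail inequality of \citet{yasin2011improved}, which states that with probability at least $1 - \delta$, jointly for all rounds $t$,
\[
  \Big\| \textstyle\sum_{\ell = 1}^{t - 1} X_\ell \xi_\ell \Big\|_{G_t^{-1}}^2
  \leq 2 \sigma^2 \log\!\left(\frac{\det(G_t)^{1/2} \det(\lambda I_d)^{-1/2}}{\delta}\right).
\]
The joint-in-$t$ nature of this statement is precisely what delivers $E_1 = \bigcap_{t = d + 1}^n E_{1, t}$ for free, with no further union bound over rounds.

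To convert the determinant into the explicit $c_1$, I would apply the determinant--trace (AM--GM) inequality: since $\mathrm{trace}(G_t) \leq d \lambda + n L^2$ with $L = \max_i \norm{x_i}$, we obtain $\det(G_t) \leq (\lambda + n L^2 / d)^d$ and hence $\det(G_t)^{1/2} \det(\lambda I_d)^{-1/2} \leq (1 + n L^2 / (d \lambda))^{d/2}$. Substituting back bounds the noise term by $\sigma \sqrt{d \log(1 + n L^2 / (d \lambda)) + 2 \log(1 / \delta)}$, which is at most $\sigma \sqrt{d \log((1 + n L^2 / (d \lambda)) / \delta)}$ once $d \geq 2$. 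Adding the bias contribution recovers exactly the stated $c_1$, and the probability-$\delta$ failure event gives $p_1 = \delta$.

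The main obstacle is genuinely confined to the self-normalized inequality, the one nontrivial ingredient (its own proof rests on the method of mixtures and a pseudo-maximization supermartingale argument); I would cite it rather than reprove it. Everything else is bookkeeping, and the only points requiring care are (i) checking the predictability and sub-Gaussianity conditions so that the martingale structure is valid, and (ii) verifying that factoring out $\normw{x_i}{G_t^{-1}}$ together with the joint-in-$t$ form of the inequality removes the need for any union bound over arms or over rounds.
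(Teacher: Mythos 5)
Your proof is correct and takes essentially the same route as the paper's: both factor out $\normw{x_i}{G_t^{-1}}$ via Cauchy--Schwarz so that no union bound over arms or rounds is needed, and then rely on the self-normalized confidence bound of \citet{yasin2011improved}. The only difference is one of packaging: the paper invokes their Theorem 2 (the confidence ellipsoid $\|\hat{\theta}_t - \theta_\ast\|_{G_t} \leq c_1$) as a black box, whereas you inline its proof --- the bias/noise decomposition, the $\lambda^{1/2} L_\ast$ bias bound, the Theorem-1 self-normalized inequality, and the determinant--trace step (including the honest $d \geq 2$ caveat that the packaged statement glosses over).
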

\begin{proof}
By the Cauchy-Schwarz inequality,
\begin{align*}
  x_i\T \hat{\theta}_t - x_i\T \theta_\ast
  = x_i\T G_t^{- \frac{1}{2}} G_t^\frac{1}{2} (\hat{\theta}_t - \theta_\ast)
  \leq \|\hat{\theta}_t - \theta_\ast\|_{G_t} \normw{x_i}{G_t^{-1}}\,.
\end{align*}
Now note that the least-squares estimate $\hat{\theta}_t$ is computed from $\sigma^2$-sub-Gaussian rewards. As a result, by Theorem 2 of \citet{abbasi-yadkori11improved} for $R = \sigma$, $\|\hat{\theta}_t - \theta_\ast\|_{G_t} \leq c_1$ holds jointly in all rounds $t \leq n$ with probability of at least $1 - \delta$. This completes the proof.
\end{proof}

The concentration lemma for perturbation noise is below.

\begin{lemma}
\label{lem:theta tilde concentration} For any $t > d$, $c > 0$, and vector $x \in \realset^d$, we have
\begin{align*}
  \probt{\abs{x\T \tilde{\theta}_t - x\T \hat{\theta}_t} \geq c \normw{x}{G_t^{-1}}}
  \leq 2 \exp\left[- \frac{c^2}{2 a^2}\right]\,.
\end{align*}
\end{lemma}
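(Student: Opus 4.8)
The plan is to write the projected perturbation as a conditionally independent weighted sum of bounded, mean-zero terms, and then apply a two-sided sub-Gaussian tail bound after controlling the variance proxy by $\normw{x}{G_t^{-1}}^2$. First I would subtract the two estimators in \eqref{eq:theta hat} and \eqref{eq:theta tilde}, which gives $\tilde{\theta}_t - \hat{\theta}_t = G_t^{-1} \sum_{\ell = 1}^{t - 1} X_\ell Z_\ell$. Projecting onto $x$ yields $x\T \tilde{\theta}_t - x\T \hat{\theta}_t = \sum_{\ell = 1}^{t - 1} w_\ell Z_\ell$, where $w_\ell = x\T G_t^{-1} X_\ell$. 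The key observation is that $G_t$ and every $X_\ell$ with $\ell \leq t - 1$ are $\cF_{t - 1}$-measurable, so under $\probt{\cdot}$ the weights $w_\ell$ are constants. The fresh draws $(Z_\ell)_{\ell = 1}^{t - 1}$ are conditionally i.i.d.\ given $\cF_{t - 1}$; moreover, because the reward pool is symmetric around zero (each entry $y$ appears together with $-y$, lines 6--8 of \cref{alg:lincore}), each $Z_\ell$ is mean-zero, i.e.\ $\condE{Z_\ell}{\cF_{t - 1}} = 0$, and by assumption $\abs{Z_\ell} \leq a$ almost surely.

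Next I would invoke Hoeffding's lemma: a mean-zero random variable bounded in $[-a, a]$ is $a^2$-sub-Gaussian. By conditional independence, the weighted sum $\sum_\ell w_\ell Z_\ell$ is then $\big(a^2 \sum_\ell w_\ell^2\big)$-sub-Gaussian given $\cF_{t - 1}$. The one genuine computation is the variance-proxy bound: since $\sum_{\ell = 1}^{t - 1} X_\ell X_\ell\T = G_t - \lambda I_d$,
\begin{align*}
  \sum_{\ell = 1}^{t - 1} w_\ell^2
  = x\T G_t^{-1} \Big(\sum_{\ell} X_\ell X_\ell\T\Big) G_t^{-1} x
  = x\T G_t^{-1} x - \lambda\, x\T G_t^{-2} x
  \leq \normw{x}{G_t^{-1}}^2,
\end{align*}
using $\lambda > 0$ and $x\T G_t^{-2} x \geq 0$. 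Plugging this into the standard sub-Gaussian tail inequality $\probt{\abs{\sum_\ell w_\ell Z_\ell} \geq c\, \normw{x}{G_t^{-1}}} \leq 2 \exp(- c^2 \normw{x}{G_t^{-1}}^2 / (2 a^2 \sum_\ell w_\ell^2))$ and cancelling $\normw{x}{G_t^{-1}}^2$ against the variance proxy gives the claimed bound $2 \exp(- c^2 / (2 a^2))$.

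I do not expect any serious obstacle here: the argument is essentially a conditional Hoeffding bound. The only points that require care are justifying that the weights $w_\ell$ are frozen under the conditional measure $\probt{\cdot}$ and that the symmetric pool construction forces $\condE{Z_\ell}{\cF_{t - 1}} = 0$; everything else is the routine identity $\sum_\ell w_\ell^2 \leq \normw{x}{G_t^{-1}}^2$ together with the sub-Gaussian tail bound.
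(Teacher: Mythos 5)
Your proposal is correct and follows essentially the same route as the paper's proof: both write $x\T \tilde{\theta}_t - x\T \hat{\theta}_t$ as the weighted sum $\sum_{\ell} (x\T G_t^{-1} X_\ell) Z_\ell$ with $\cF_{t-1}$-measurable weights and conditionally independent bounded perturbations, apply a Hoeffding-type bound, and control the variance proxy via $\sum_{\ell} x\T G_t^{-1} X_\ell X_\ell\T G_t^{-1} x \leq \normw{x}{G_t^{-1}}^2$ using the $\lambda I_d$ term. Your explicit justification that the symmetric pool construction gives $\condE{Z_\ell}{\cF_{t-1}} = 0$ is a point the paper leaves implicit when invoking Hoeffding's inequality, but it is the same argument.
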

\begin{proof}
Let
\begin{align*}
  U
  & = x\T \tilde{\theta}_t
  = \sum_{\ell = 1}^{t - 1} x\T G_t^{-1} X_\ell (Y_\ell + Z_\ell)\,, \\
  \bar{U}
  & = x\T \hat{\theta}_t
  = \sum_{\ell = 1}^{t - 1} x\T G_t^{-1} X_\ell Y_\ell\,,
\end{align*}
and $D = U - \bar{U}$. Then by Hoeffding's inequality,
\begin{align*}
  \probt{\abs{x\T \tilde{\theta}_t - x\T \hat{\theta}_t} \geq c \normw{x}{G_t^{-1}}}
  = \probt{\abs{D} \geq c \normw{x}{G_t^{-1}}}
  \leq 2 \exp\left[- \frac{c^2 \normw{x}{G_t^{-1}}^2}
  {2 a^2 \sum_{\ell = 1}^{t - 1} x\T G_t^{-1} X_\ell X_\ell\T G_t^{-1} x}\right]\,.
\end{align*}
This step of the proof relies on the fact that new $Z_\ell \in [- a, a]$ are generated in each round $t$. Also note that
\begin{align}
  \sum_{\ell = 1}^{t - 1} x\T G_t^{-1} X_\ell X_\ell\T G_t^{-1} x
  \leq x\T G_t^{-1} \left(\sum_{\ell = 1}^{t - 1}
  X_\ell X_\ell\T + \lambda I_d\right) G_t^{-1} x
  = \normw{x}{G_t^{-1}}^2\,.
  \label{eq:cancel G inverse}
\end{align}
Our claim follows from chaining all above inequalities.
\end{proof}

The key anti-concentration lemma for perturbation noise is below.

\begin{lemma}
\label{lem:theta tilde anti-concentration} For any round $t > d$, constant $c$ such that $8 a^2 \log n > c^2 > 0$, and vector $x \in \realset^d$ such that $x \neq \mathbf{0}$, we have
\begin{align*}
  \probt{x\T \tilde{\theta}_t - x\T \hat{\theta}_t > c \normw{x}{G_t^{-1}}}
  \geq \frac{1}{16 \log n}
  \left[\frac{\eta^2}{a^2} \left(1 - \frac{\lambda}{\lambda_{\min}(G_{d + 1})}\right) -
  \frac{c^2}{a^2} - \frac{2}{n^3}\right]\,.
\end{align*}
\end{lemma}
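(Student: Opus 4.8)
The plan is to condition on $\cF_{t-1}$ and reduce the statement to an anti-concentration bound for a weighted sum of the (conditionally) independent perturbations. Since the least-squares and perturbed estimates differ only through the $Z_\ell$, writing $w_\ell = x\T G_t^{-1} X_\ell$ gives
\[
  D := x\T\tilde\theta_t - x\T\hat\theta_t = \sum_{\ell=1}^{t-1} w_\ell Z_\ell,
\]
where the weights $w_\ell$ are $\cF_{t-1}$-measurable and the $Z_\ell$ are freshly drawn i.i.d.\ from the pool, so conditionally $D$ is a sum of $t-1$ independent bounded terms. The target is a lower bound on $\probt{D > c\normw{x}{G_t^{-1}}}$.

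First I would collect the three moment facts I need. Because the pool is symmetric about zero (each entry is paired with its negation in lines 6--8 of \cref{alg:lincore}), each $Z_\ell$ is symmetric with mean $0$, so $\Et{D}=0$ and $D$ is symmetric about $0$ given the past. For the variance, $\condvar{D}{\cF_{t-1}} = \sum_\ell w_\ell^2\,\condvar{Z_\ell}{\cF_{t-1}} \ge \eta^2 \sum_\ell w_\ell^2$, and using $\sum_\ell X_\ell X_\ell\T = G_t - \lambda I_d$ I would compute
\[
  \sum_\ell w_\ell^2 = x\T G_t^{-1} x - \lambda\, x\T G_t^{-2} x
  \ge \Big(1 - \tfrac{\lambda}{\lambda_{\min}(G_t)}\Big)\normw{x}{G_t^{-1}}^2 .
\]
Since $G_t \supreal G_{d+1}$ for $t>d$, we have $\lambda_{\min}(G_t)\ge\lambda_{\min}(G_{d+1})$, giving $V := \condvar{D}{\cF_{t-1}} \ge \eta^2\big(1 - \lambda/\lambda_{\min}(G_{d+1})\big)\normw{x}{G_t^{-1}}^2$. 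Finally, since $\abs{Z_\ell}\le a$, Hoeffding's lemma makes $D$ a $\tilde\sigma^2$-sub-Gaussian variable with $\tilde\sigma^2 = a^2\sum_\ell w_\ell^2 \le a^2\normw{x}{G_t^{-1}}^2$, reusing the inequality behind \eqref{eq:cancel G inverse}.

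The core step is a second-moment (Paley--Zygmund-style) anti-concentration argument. By symmetry, $\condE{D^2\I{D>0}}{\cF_{t-1}} = V/2$, and splitting at the two truncation levels $w = c\normw{x}{G_t^{-1}}$ and $y$ yields
\[
  \tfrac V2 \le w^2\,\probt{0<D\le w} + y^2\,\probt{D>w} + \condE{D^2\I{D>y}}{\cF_{t-1}},
\]
where the tail term is controlled by $\condE{D^2\I{|D|>y}}{\cF_{t-1}} \le (2y^2+4\tilde\sigma^2)e^{-y^2/(2\tilde\sigma^2)}$ via the sub-Gaussian tail $\probt{|D|>s}\le 2e^{-s^2/(2\tilde\sigma^2)}$. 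Rearranging gives $\probt{D>w} \ge (V - w^2 - \mathrm{tail})/(2y^2)$. I would then choose $y^2 = 8\tilde\sigma^2\log n = 8a^2\log n\,\normw{x}{G_t^{-1}}^2$: this makes $e^{-y^2/(2\tilde\sigma^2)} = n^{-4}$, forces $y>w$ exactly under the hypothesis $8a^2\log n > c^2$, produces the prefactor $1/(16\log n)$, and lets the tail be absorbed into the $2/n^3$ slack (for $n\ge 24$). Substituting the variance and sub-Gaussian bounds then gives the claimed inequality.

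I expect the anti-concentration step to be the main obstacle. Standard inequalities bound only upper tails, whereas here I need a lower bound on the probability that $D$ is large; the second-moment method converts the variance lower bound into such a bound, but only if the sub-Gaussian tail keeps the second moment from escaping past the truncation level $y$. Balancing these — taking $y$ large enough that the escaping mass is negligible yet small enough that the factor $1/y^2$ does not destroy the bound — is the delicate part, and is precisely where the condition $8a^2\log n>c^2$ and the $\log n$ factor in $p_3$ arise. The variance step also quietly needs $\lambda < \lambda_{\min}(G_{d+1})$ so that $1-\lambda/\lambda_{\min}(G_{d+1})$ is positive, which the regret proof guarantees through the choice $\lambda = \lambda_{\min}(G_{d+1})/4$.
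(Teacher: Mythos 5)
Your proof is correct and is essentially the paper's own argument: the same conditional second-moment decomposition of $D = \sum_{\ell=1}^{t-1} x\T G_t^{-1} X_\ell Z_\ell$ truncated at the two levels $c\normw{x}{G_t^{-1}}$ and $\sqrt{8a^2\log n}\,\normw{x}{G_t^{-1}}$, the same variance lower bound $\eta^2\left(1-\lambda/\lambda_{\min}(G_{d+1})\right)\normw{x}{G_t^{-1}}^2$ obtained via \eqref{eq:cancel G inverse} and $\lambda_{\min}(G_t)\geq\lambda_{\min}(G_{d+1})$, and the same symmetry step to pass from $\abs{D}$ to the one-sided event. The only deviation is the extreme-tail term: you integrate the sub-Gaussian tail to get $\Et{D^2 \I{D>y}} = O\left(a^2\normw{x}{G_t^{-1}}^2 \log n \cdot n^{-4}\right)$, whereas the paper combines the almost-sure bound $\abs{D}\leq a\sqrt{n}\,\normw{x}{G_t^{-1}}$ (Cauchy--Schwarz) with $\probt{\bar{F}_2}\leq 2/n^4$ from \cref{lem:theta tilde concentration}, which yields exactly $2a^2\normw{x}{G_t^{-1}}^2/n^3$ for every $n$. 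As a result, your constants fit under the stated $2/n^3$ term only once $n$ exceeds roughly $8\log n + 2$ (your ``for $n\geq 24$'' is slightly optimistic); this is harmless in context, since \cref{thm:lincore regret bound} assumes $n\geq 24$ anyway, but note that the lemma as stated carries no restriction on $n$.
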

\begin{proof}
Let $U$, $\bar{U}$, and $D$ be defined as in the proof of \cref{lem:theta tilde concentration}. Then $x\T \tilde{\theta}_t - x\T \hat{\theta}_t = D$. We also define events
\begin{align*}
  F_1
  = \set{\abs{D} \leq c \normw{x}{G_t^{-1}}}\,, \quad
  F_2
  = \set{\abs{D} \leq \sqrt{8 a^2 \log n} \normw{x}{G_t^{-1}}}\,.
\end{align*}
Since $8 a^2 \log n > c^2$, $F_1 \subset F_2$. Then
\begin{align*}
  \condvar{U}{\cF_{t - 1}}
  = \Et{D^2 \I{F_1}} +
  \Et{D^2 \I{\bar{F}_1, F_2}} +
  \Et{D^2 \I{\bar{F}_2}}\,.
\end{align*}
Now we bound each term on the right-hand side of the above equality from above. From the definition of event $F_1$, term $1$ is bounded as
\begin{align*}
  \Et{D^2 \I{F_1}}
  \leq c^2 \normw{x}{G_t^{-1}}^2\,.
\end{align*}
By the definition of $F_1$ and $F_2$, term $2$ is bounded as
\begin{align*}
  \Et{D^2 \I{\bar{F}_1, F_2}}
  \leq (8 a^2 \normw{x}{G_t^{-1}}^2 \log n) \, \probt{\bar{F}_1, F_2 \text{ occur}}
  \leq (8 a^2 \normw{x}{G_t^{-1}}^2 \log n) \, \probt{\abs{D} > c \normw{x}{G_t^{-1}}}\,.
\end{align*}
Now we bound term $3$. First, note that
\begin{align*}
  \abs{D}
  \leq a \sum_{\ell = 1}^{t - 1} \abs{x\T G_t^{-1} X_\ell}
  \leq a \sqrt{n} \sqrt{\sum_{\ell = 1}^{t - 1}
  x\T G_t^{-1} X_\ell X_\ell\T G_t^{-1} x}
  \leq a \sqrt{n} \normw{x}{G_t^{-1}}\,,
\end{align*}
where the last step follows from \eqref{eq:cancel G inverse}. Then, by the definition of event $F_2$ and \cref{lem:theta tilde concentration} for $c = \sqrt{8 a^2 \log n}$,
\begin{align*}
  \Et{D^2 \I{\bar{F}_2}} 
  \leq a^2 n \normw{x}{G_t^{-1}}^2 \probt{\bar{F}_2}
  \leq \frac{2 a^2 \normw{x}{G_t^{-1}}^2}{n^3}\,.
\end{align*}
Finally, by the definition of $U$,
\begin{align*}
  \condvar{U}{\cF_{t - 1}}
  \geq \eta^2 \sum_{\ell = 1}^{t - 1} x\T G_t^{-1} X_\ell X_\ell\T G_t^{-1} x
  = \eta^2 (\normw{x}{G_t^{-1}}^2 - \lambda x\T G_t^{-2} x)\,.
\end{align*}
We bound the last term from below as follows. For any positive semi-definite matrix $M \in \realset^{d \times d}$,
\begin{align*}
  x\T M^2 x
  & = \lambda_{\max}^2(M) \, x\T \left(\lambda_{\max}^{-2}(M) M^2\right) x
  \leq \lambda_{\max}^2(M) \, x\T \left(\lambda_{\max}^{-1}(M) M\right) x \\
  & = \lambda_{\max}(M) \normw{x}{M}^2\,,
\end{align*}
where the inequality follows from the fact that all eigenvalues of $\lambda_{\max}^{-2}(M) M^2$ are in $[0, 1]$. We apply this upper bound for $M = G_t^{-1}$ and get that
\begin{align*}
  \condvar{U}{\cF_{t - 1}}
  \geq \eta^2 \left(1 - \frac{\lambda}{\lambda_{\min}(G_t)}\right)
  \normw{x}{G_t^{-1}}^2
  \geq \eta^2 \left(1 - \frac{\lambda}{\lambda_{\min}(G_{d + 1})}\right)
  \normw{x}{G_t^{-1}}^2\,,
\end{align*}
where the last inequality is by $\lambda_{\min}(G_t) \geq \lambda_{\min}(G_{d + 1})$ and holds for any $t > d$.

Now we combine all above inequalities and get
\begin{align*}
  \left[\eta^2 \left(1 - \frac{\lambda}{\lambda_{\min}(G_{d + 1})}\right) -
  c^2 - \frac{2 a^2}{n^3}\right] \normw{x}{G_t^{-1}}^2
  \leq (8 a^2 \normw{x}{G_t^{-1}}^2 \log n) \,
  \probt{\abs{D} > c \normw{x}{G_t^{-1}}}\,.
\end{align*}
Since $2 a \log n > 0$ and $\normw{x}{G_t^{-1}} > 0$, the above inequality can be simplified as
\begin{align*}
  \probt{\abs{D} > c \normw{x}{G_t^{-1}}}
  \geq \frac{1}{8 \log n}
  \left[\frac{\eta^2}{a^2} \left(1 - \frac{\lambda}{\lambda_{\min}(G_{d + 1})}\right) -
  \frac{c^2}{a^2} - \frac{2}{n^3}\right]\,.
\end{align*}
Finally, we note that the distribution of $D$ is symmetric. Thus for any $\eps > 0$, $\probt{\abs{D} > \eps} = 2 \probt{D > \eps}$. This completes the proof.
\end{proof}

\subsection{Reward Pool Lemmas}
\label{sec:reward pool lemmas}

\initialvariance*
\begin{proof}
Given a random variable $X\sim\mathcal N(\mu,\sigma^2)$, and a sample of it $(x_i)_{i=1}^q$ of size $q$. The sample variance is 
\begin{equation*}
    S^2 = \frac{1}{q-1}\sum_{i=1}^q (x_i-\bar x)^2,
\end{equation*}
where $\bar x = \frac{1}{q}\sum_{i=1}^q x_i$ is the mean of these $q$ observations. $S^2$ follows a scaled chi-squared distribution with ($q-1$) degrees of freedom,
\begin{equation}
    S^2\sim\frac{\sigma^2}{q-1}\chi^2_{q-1}. 
\end{equation}
The mean of $\chi^2_{q-1}$ is $q-1$. Let $0<z<1$ and $F(z(q-1);q-1)=\mathbb P(S^2<z\sigma^2)$ be the cumulative distribution function (CDF) of $\chi^2_{q-1}$. Based on Chernoff bounds on the lower tail of the CDF \citep{dasgupta2003elementary}, we have
\begin{equation*}
    F(z(q-1);q-1)\leq(ze^{1-z})^{(q-1)/2}. 
\end{equation*}
Let $(ze^{1-z})^{(q-1)/2} \leq 1/n^2$ and solve for $q$, we have $P(S^2<z\sigma^2)\leq 1/n^2$ when $q\geq \frac{4\log n}{z-1-\log z}+1$. However, the empirical variance of this sample is 
\begin{equation}
    \hat\sigma^2 = \frac{1}{q}\sum_{i=1}^q (x_i-\bar x)^2 = \frac{q-1}{q}S^2.
\end{equation}
Thus, $\hat\sigma^2 \geq \frac{S^2}{2} \geq \frac{z}{2}\sigma^2$ with probability of at least $1-1/n^2$, as long as $q\geq 2$.  

Besides, note that the sampled values in the reward pool $\mathcal R_t$ are from a mixture of Gaussian distributions with different means, rather than a single Gaussian. Denote the sample variance of $q$ sampled values from a mixture of Gaussian distributions with the same variance $\sigma^2$ by $\tilde S^2$. \cref{lem:general chi-square upper bound} claims that $\prob{S^2\leq \varepsilon^2} \geq \prob{\tilde S^2\leq \varepsilon^2}$. Therefore, $\sigma^2(\mathcal R_t) \geq \frac{z}{2}\sigma^2$ with probability of at least $1/n^2$ for any $t > \frac{4\log n}{z-1-\log z}+1$. Finally, applying union bound to all rounds $t$ that $t > \frac{4\log n}{z-1-\log z}+1$ completes the proof. 
\end{proof}

\boundreward*
\begin{proof}
The maximum number of rewards in the agent's history is $n$. Each reward $X_i$ can be viewed as a sample from $X_i\sim\mathcal N(\mu_i,\sigma^2)$, where $\mu_i$ is the mean reward of the pulled arm. Based on Hoeffding's inequality, we have 
\begin{equation}
    \prob{X_i-\mu_i\geq \sqrt{2\sigma^2\log(n/\delta)}} \leq \delta/n.
\end{equation}
Then with union bound, we have for all $(X_i)_{i=1}^n$, the probability of any of them being away from the mean by more than $\sqrt{2\sigma^2\log(n/\delta)}$ is smaller than $\delta$. As $\mu_i\in[0,1]$, we have for any $X_i, i\in[n]$, 
\begin{equation}
    -\sqrt{2\sigma^2\log(n/\delta)}\leq X_i\leq \sqrt{2\sigma^2\log(n/\delta)}+1
\end{equation}    
with probability of at least $\delta$. The mean of the rewards is in the same range. Finally, subtracting the mean, scaling the rewards by $\alpha$ and letting $\delta=1/n$ completes the proof.  
\end{proof}

\begin{lemma}
\label{lem:general chi-square upper bound} Let $Z = (Z_i)_{i = 1}^d$ be a vector of independent standard normal variables. Let $A \in \mathbb{R}^{d \times d}$ be an arbitrary matrix and $X = A Z$. Then for any $\varepsilon > 0$ and vector $v \in \mathbb{R}^d$, we have that
\begin{align*}
  \prob{\normw{X}{2}^2 \leq \varepsilon^2}
  \geq \prob{\normw{X + v}{2}^2 \leq \varepsilon^2}\,.
\end{align*}
\end{lemma}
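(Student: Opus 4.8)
The plan is to recognize that $X = A Z$ is a centered Gaussian vector with covariance $\Sigma = A A\T$, so the statement is really a comparison between the Gaussian mass of a ball centered at the origin and the mass of a translated ball. Writing $B = \set{x \in \realset^d : \normw{x}{2} \leq \varepsilon}$, the event on the left-hand side is $\set{X \in B}$, while the event on the right-hand side is $\set{X + v \in B} = \set{X \in B - v}$, a translate of $B$. Since $B$ is convex and symmetric about the origin, the claim reduces to the assertion that a centered Gaussian measure assigns at least as much mass to a symmetric convex set as to any of its translates. This is exactly Anderson's inequality.

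To make this precise, I would invoke Anderson's inequality in the following form: if $f \geq 0$ is symmetric, $f(-x) = f(x)$, and has convex superlevel sets (i.e.\ $f$ is quasi-concave), and $K$ is a convex symmetric set, then $\int_K f(x) \dif x \geq \int_K f(x + v) \dif x$ for every $v$. The centered Gaussian density is symmetric and log-concave, hence quasi-concave, and $B$ is convex and symmetric; so taking $f$ to be the density and $K = B$ gives $\prob{X \in B} \geq \int_B f(x + v) \dif x = \prob{X \in B - v}$ after the substitution $y = x + v$. This is precisely the asserted inequality, with $\prob{X \in B - v} = \prob{\normw{X + v}{2}^2 \leq \varepsilon^2}$.

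The one technical point to address is that $A$ is allowed to be singular, in which case $\Sigma = A A\T$ is degenerate and $X$ has no density on all of $\realset^d$. I would handle this by restricting to the range subspace $V = \mathrm{col}(A)$ on which $X$ is supported: there the law of $X$ has a symmetric log-concave density, $B \cap V$ is convex and symmetric within $V$, and one decomposes $v = v_V + v_\perp$ into its components in $V$ and $V^\perp$. Because $X \in V$, we have $\normw{X + v}{2}^2 = \normw{X + v_V}{2}^2 + \normw{v_\perp}{2}^2$, so dropping $v_\perp$ can only increase the event probability, and then Anderson's inequality on $V$ with translation $v_V$ finishes the chain. Alternatively, and perhaps more cleanly, I would perturb the covariance to $\Sigma + \delta^2 I_d$, apply the nondegenerate case, and let $\delta \to 0$ using continuity of the Gaussian measure of the fixed sets $B$ and $B - v$.

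The main obstacle is really just the careful invocation of Anderson's inequality together with the degenerate-covariance bookkeeping; the probabilistic content is entirely contained in Anderson's theorem. If a self-contained argument were required instead of citing Anderson, the bulk of the work would be reproving Anderson's inequality itself, e.g.\ via the Brunn--Minkowski inequality applied to the symmetric convex superlevel sets of the density, which is the step I would expect to be the most delicate.
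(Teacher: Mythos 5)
Your proof is correct, and it rests on the same geometric fact as the paper's own proof---but you make rigorous what the paper merely asserts. The paper's argument says that since the density contours of $Y = X + v$ are ``symmetric and ellipsoidal'' and the constraint is a ball, $\prob{\normw{Y}{2}^2 \leq \varepsilon^2}$ is maximized when the mean is $\mathbf{0}$; that assertion is not self-evident---it is precisely Anderson's inequality, which the paper neither names nor proves. You identify it as such, state the hypotheses under which it holds (symmetric quasi-concave density, symmetric convex set), and apply it correctly to the ball $B$. You also repair a genuine gap that the paper leaves open: when $A$ is singular, $X$ has no density on $\realset^d$, so the contour argument does not even parse as written; your restriction to the column space $V = \mathrm{col}(A)$ with the decomposition $v = v_V + v_\perp$ (noting $\normw{X + v}{2}^2 = \normw{X + v_V}{2}^2 + \normw{v_\perp}{2}^2$, so discarding $v_\perp$ can only increase the probability), or alternatively the perturbation $\Sigma + \delta^2 I_d$ followed by $\delta \to 0$, handles this cleanly. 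The trade-off is that your proof outsources the probabilistic core to a citation of Anderson's theorem (or, if self-containment were demanded, to reproving it via Brunn--Minkowski), whereas the paper's proof is shorter but is essentially a restatement of the conclusion in geometric language rather than a derivation of it.
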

\begin{proof}
Since $Z \sim \mathcal{N}(\mathbf{0}, I_d)$, we have that $X \sim \mathcal{N}(\mathbf{0}, A A\T)$. Let $Y = X + v$. Then we also have that $Y \sim \mathcal{N}(v, A A\T)$. The important properties of $X$ and $Y$ are that their covariance matrices are the same.

Now note the following. The quantity $\prob{\normw{Y}{2}^2 \leq \varepsilon^2}$ is the density of $Y$ within distance $\varepsilon$ of $\mathbf{0}$. Since $Y$ is centered at $\mu$, its density contours are symmetric and ellipsoidal, and the $\varepsilon$ constraint is a ball, we have that $\prob{\normw{Y}{2}^2 \leq \varepsilon^2}$ is maximized when $\mu = \mathbf{0}$. In other words, for any $v$, we have that
\begin{align*}
  \prob{\normw{X}{2}^2 \leq \varepsilon^2}
  \geq \prob{\normw{Y}{2}^2 \leq \varepsilon^2}\,.
\end{align*}
This concludes the proof.
\end{proof}

\end{document}